\DeclareMathOperator{\E}{\mathbb{E}}
\DeclareMathOperator*{\argmin}{arg\,min}
\newcommand{\one}{\mathbf{1}}
\newcommand{\prob}{\mathds{P}}
\newcommand{\reals}{\mathbb{R}}
\newcommand{\rset}{\mathbb{R}}
\newcommand{\naturals}{\mathbb{N}}
\newcommand{\expec}{\mathbb{E}}
\newcommand{\diag}{\operatorname{diag}}
\newtheorem{assumption}{Assumption}
\newtheorem{definition}{Definition}
\newtheorem{remark}{Remark}
\newtheorem{theorem}{Theorem}
\newtheorem{proposition}{Proposition}
\newtheorem{lemma}{Lemma}
\newcommand{\diff}{\mathrm{d}}
\newcommand{\js}[1]{\textcolor{magenta}{\small\sffamily\upshape [#1]}}
\title{A Quadrature Rule combining \\ Control Variates and Adaptive Importance Sampling}
\author{%
	R\'{e}mi Leluc \\
	LTCI, T\'{e}l\'{e}com Paris \\
	Institut Polytechnique de Paris, France \\
	\texttt{remi.leluc@telecom-paris.fr} \\
	\And
	Fran\c{c}ois Portier \\
	CREST \\
	ENSAI, France \\
	\texttt{francois.portier@gmail.com} \\
	\And
	Aigerim Zhuman \\
	LIDAM, ISBA\\
	UCLouvain, Belgium \\
	\texttt{aigerim.zhuman@uclouvain.be} \\
	\And
	Johan Segers \\
	LIDAM, ISBA\\
	UCLouvain, Belgium \\
	\texttt{johan.segers@uclouvain.be} \\
}
\begin{document}
	
	\maketitle
	
	\begin{abstract} Driven by several successful applications such as in stochastic gradient descent or in Bayesian computation, control variates have become a major tool for Monte Carlo integration. However, standard methods do not allow the distribution of the particles to evolve during the algorithm, as is the case in  sequential simulation methods. Within the standard adaptive importance sampling framework, a simple weighted least squares approach is proposed to improve the procedure with control variates. The procedure takes the form of a quadrature rule with adapted quadrature weights to reflect the information brought in by the control variates. The quadrature points and weights do not depend on the integrand, a computational advantage in case of multiple integrands.	Moreover, the target density needs to be known only up to a multiplicative constant.	Our main result is a non-asymptotic bound on the probabilistic error of the procedure. The bound proves that for improving the estimate's accuracy, the benefits from adaptive importance sampling and control variates can be combined. The good behavior of the method is illustrated empirically on synthetic examples and real-world data for Bayesian linear regression.
		%
		%
		%
		%
		%
	\end{abstract}

\externaldocument{main_article}

\section{Introduction}
\label{section:introduction}
\vspace{-0.1cm}	
In recent years, sequential simulation has emerged as a leading approach to compute multidimensional integrals. A key object in sequential simulation is the sequence of distributions, called the policy, from which to generate the random variables, called particles, used to approximate the integrals of interest.
The policy is designed to evolve in the course of the algorithm to mimic the target density, which may itself be known only up to a proportionality constant.
While the design of algorithms with adaptive policies has been of major interest recently, only a few studies have focused on using control variates to reduce the variance. 
This paper provides a new method to incorporate control variates within standard sequential algorithms. The proposed approach significantly improves the accuracy of the initial algorithm, both theoretically and in practice.

		
\textbf{The sequential framework.}
Consider the problem of approximating the integral $\int g f \, \diff \lambda = \int_{\reals^d} g(x) f(x) \, \diff x$, where $\lambda$ is the $d$-dimensional Lebesgue measure, $f$ is a probability density on $\reals^d$ and the integrand $g$ is a real-valued function on $\reals^d$. For instance, one may think of $f$ as the posterior density in Bayesian inference. Let $(q_i)_{i\geq 0} $ be the policy of the algorithm, i.e., a sequence of probability densities which evolves adaptively depending on previous outcomes. The particles $(X_i)_{i\geq 1}$ are generated sequentially---at iteration $i$, particle $X_i$ is drawn from $q_{i-1}$. The integral $\int g f \, \diff \lambda$ is estimated by the normalized sum $\bigl( \sum_{i=1}^n w_i g(X_i) \bigr) / \bigl( \sum_{i=1}^n w_i \bigr)$, where $w_i = f(X_i) / q_{i-1} (X_i)$ are the importance weights. The normalization $\sum_{i=1}^n w_i$ allows to deal with situations where the target density $f$ is known only up to a proportionality constant.

Such an algorithm is part of the \emph{adaptive importance sampling} (AIS) framework. Many different ways have been investigated to update the densities $q_i$ adaptively. Early works that inspired such sequential schemes include \cite{geweke:1989,kloek+v:1978,oh1992adaptive} where the sampling policy is chosen out of a parametric family. The parametric approach has been further extended by the Population Monte Carlo framework \citep{cappe2008adaptive,cappe+g+m+r:2004,martino2017layered}. Various asymptotic results have been obtained in \cite{chopin:2004,douc+m:2008,portier2018asymptotic}. 
In \cite{dai2016provable,delyon2021safe,korba2022adaptive,zhang:1996}, \emph{nonparametric importance sampling} based on kernel smoothing is studied. The latter bears resemblance to \textit{sequential Monte Carlo} methods \citep{DelMoralDoucetJasra,chopin:2004}, in which the target distribution $f$ changes in the course of the algorithm. 
	
\textbf{Control variates.} 
Let $h = (h_1,\ldots, h_m)^\top$ be a vector of real-valued functions on $\reals^d$ such that for each $k$, the integral $\int h_k f \, \diff \lambda$ is known. Without loss of generality, suppose that $\int h f \, \diff \lambda = 0$. The functions $h_k$ are called control variates and can be obtained in different ways. In Bayesian statistics, Stein control variates \citep{oates2017control} are constructed by applying the second-order Stein operator to functions satisfying certain regularity conditions \citep{mira2013zero}. Other control variates might be created by re-weighting a function $h^*$ that satisfies $\int h^* \, \diff \lambda = 0$ via $h = h^* / f$. The use of control variates is a well studied variance-reduction technique \citep{glynn+s:2002,owen+z:2000}. The benefits can be established theoretically in terms of error bounds \citep{oates2017control,LelucPortierSegers2021}, weak convergence \citep{PortierSegers2019}, the excess risk \citep{belomestny2022empirical} and even uniform error bounds over large classes of integrands \citep{plassier2020risk}. In practice, the control variates framework has led to efficient procedures in reinforcement learning \cite{JieAbbeel2010,liu+f+m+z+p+l:2017} and optimization \cite{WangChenSmolaXing}, to name a few. Importance sampling and control variates in case of a Gaussian target density is explored in \cite{jourdain2009}.
The procedure in \cite{kawai2020} incorporates control variates and is said to involve adaptive importance sampling, but in fact the particles are always sampled from the uniform distribution on the unit cube. To the best of our knowledge, the existing control variate methods do not account for sequential changes in the particle distribution as is the case in AIS.

\textbf{AISCV estimate.} 
The proposed approach to use control variates within the sequential AIS framework relies on the ordinary least squares expression of control variates (see for instance \cite{PortierSegers2019}). To take care of the policy changes, some re-weighting must be applied. The AISCV estimate of the integral $\int g f \, \diff \lambda$ is defined as the first coordinate of the solution to the weighted least squares problem
\vspace{-0.25cm}	
\[
	(\hat \alpha_n,  \hat \beta_n ) = 
	\argmin_{a \in \reals, b \in \reals^m} \sum_{i=1}^n w_i  \left( g(X_i) - a - b^\top h(X_i) \right)^2,
\]
with $w_i$ the importance weights from before.
The AISCV estimate $\hat{\alpha}_n$ has several interesting properties: (a) whenever $g$ is of the form $\alpha +  \beta^\top h$ for some $\alpha \in \reals$ and $\beta \in \reals^m$, the error is zero, i.e., $\hat{\alpha}_n = \alpha = \int g f \, \diff \lambda$; (b) the estimate takes the form of a  quadrature rule $\hat \alpha_n = \sum_{i=1}^n v_{n,i} g(X_i)$, for quadrature weights $v_{n,i}$ that do not depend on the function $g$ and that can be computed by a single weighted least squares procedure; and (c) it can be computed even when $f$ is known only up to a multiplicative constant. Point~(a) suggests that when the linear combinations of the functions $h_k$ span a rich function class, the integration error is likely to be small. Point~(b) implies that multiple integrals can be computed just as easily as a single one. Point~(c) shows that the approach is applicable for Bayesian computations. In addition, the control variates can be brought into play in a \emph{post-hoc} scheme, after generation of the particles and importance weights, and this for any AIS algorithm.

\textbf{Main result.} 
The main theoretical result of the paper is a probabilistic, non-asymptotic bound on  $\hat{\alpha}_n - \alpha$. Under appropriate conditions, the bound scales as $\tau / \sqrt n $, where $\tau^2$ is the scale constant in a sub-Gaussian tail condition on the error variable $\varepsilon = g - \alpha - \beta^\top h$ for $(\alpha, \beta) = \argmin_{a, b} \int (g - a - b^\top h)^2 f \, \diff \lambda $. Note that $\varepsilon$ has the smallest possible variance one could get using control variates $h$.  As a consequence, when the space of control variates is well suited for approximating $g$, the AISCV estimate will be highly accurate. Also, our bound depends only on the linear function space spanned by the control variates $h_1,\ldots,h_m$, not on the particular basis chosen in that space. The results rely on martingale theory, in particular on a concentration inequality for norm-subGaussian martingales in \cite{jin2019short}. In the course of the proof, we develop a novel bound on the smallest eigenvalue of certain random matrices, extending an inequality from \citep{Tropp2015} to the martingale case.
	
\textbf{Outline.}  
Section~\ref{sec:preliminaries} introduces the general framework of adaptive importance sampling and control variates. Next, Section~\ref{section:ais_and_cv} presents the AISCV estimate and the associated quadrature rule. Section~\ref{section:main_results} contains the statements of the theoretical results while Section~\ref{sec:practical} gathers practical considerations, including the construction of control variates. Numerical experiments are presented in Section~\ref{sec:numerical} and Section \ref{sec:conclu} concludes the main part of the paper with a discussion for further research. 
	
	\section{Preliminaries on Monte Carlo integration}
	\label{sec:preliminaries}
	
	The aim of this section is to present the required mathematical framework for Monte Carlo integration and the variance reduction methods of interest, namely adaptive importance sampling and the control variate technique. 	Recall that $g : \reals^d \to \reals$ is an integrand and $f$ a probability density on $\reals^d$. The aim is to compute $\expec_f[g] = \int gf \, \diff \lambda$.

	\textbf{Adaptive importance sampling.} 
	In adaptive importance sampling (AIS), $\expec_f[g]$ is estimated by a weighted mean over a sample of random particles $X_1,\ldots,X_n$ in $\reals^d$.
	Since appropriate sampling densities naturally depend on $g$ and $f$, we generally cannot simulate from them. 
	They are then approximated in an adaptive manner by a family of tractable densities $(q_i)_{i \ge 0}$ that often evolve towards a density $q_{\mathrm{opt}}$ that optimizes some criterion. 
	While the starting density $q_0$ is fixed, the density $q_i$ for $i \ge 1$ is determined in function of the particles $X_1,\ldots,X_i$ already sampled; think for instance of a parametric family, where the parameter of $q_i$ is a function of $X_1,\ldots,X_i$. Given the particles $X_1,\ldots,X_i$, the next particle, $X_{i+1}$, is then drawn from $q_i$.
	Formally, let $(X_i)_{i \ge 1}$ be a sequence of random vectors on $\reals^d$ defined on some probability space $(\Omega, \mathcal{F}, \prob)$.
	The distribution of the sequence $(X_i)_{i \ge 1}$ is specified by its policy as defined below.
	
	\begin{definition}[Policy]
		\label{def:policy}
		A policy is a random sequence of probability density functions $(q_i)_{i \ge 0}$ on $\reals^d$ adapted to the $\sigma$-field $(\mathcal{F}_i)_{i \ge 0}$ defined by $\mathcal{F}_0 = \{\varnothing, \Omega\}$ and $\mathcal{F}_i = \sigma(X_1,\dots,X_i)$ for $i \ge 1$. 
		The sequence  $(q_i)_{i \ge 0}$ is the policy of $(X_i)_{i \ge 1}$ whenever $X_i$ has density $q_{i-1}$ conditionally on $\mathcal{F}_{i-1}$.
	\end{definition}

	The (normalized) adaptive importance sampling estimate of $\expec_f[g]$ is then defined as
	\begin{equation}
	\label{eq:AIS}
		I^{\mathrm{(ais)}}_n(g) = \frac{\sum_{i=1}^n w_i g(X_i)}{\sum_{i=1}^n w_i}
		\quad \text{where} \quad w_i = \frac{f(X_i)}{q_{i-1}(X_i)} \quad \text{for}\ i=1,\ldots,n.
	\end{equation}
	The sampling weights $w_i$ reflect the fact that $X_i$ has been sampled from $q_{i-1}$ rather than from $f$.
	The division by $\sum_{i=1}^n w_i$ rather than by $n$ has two benefits: first, the integration is exact for constant integrands, and second, $f$ needs to be known only up to a proportionality constant, an advantage for Bayesian inference.
	
	Since updating the density $q_i$ at each iteration may be computationally expensive, it is customary to hold it fixed over a pre-determined number of iterations. Writing $n = n_1 + \cdots + n_T$ in terms of positive integers $(n_t)_{t=1}^T$ called the \emph{allocation policy}, the AIS estimate then becomes
	\begin{equation}
	\label{eq:AIS:nt}
		I^{\mathrm{(ais)}}_T(g) = \frac{\sum_{t=1}^T \sum_{i=1}^{n_t} w_{t,i} g(X_{t,i})}{\sum_{t=1}^T \sum_{i=1}^{n_t} w_{t,i}}
		\quad \text{where} \quad w_{t,i} = \frac{f(X_{t,i})}{q_{t}(X_{t,i})}
	\end{equation}
	for $t = 1,\ldots,T$ and $i=1,\ldots,n_t$. At stage $t$, the particles $X_{t,1},\ldots,X_{t,n_t}$ are sampled independently from $q_{t-1}$, while all particles sampled up to and including stage $t$ are used to determine the sampling density $q_t$ for stage $t+1$. It is easy to see that the two formulations of the AIS estimate are equivalent: \eqref{eq:AIS} arises from \eqref{eq:AIS:nt} by setting $n_t = 1$ for all $t$, while \eqref{eq:AIS:nt} can be obtained from \eqref{eq:AIS} by constructing the policy in such a way that the densities $q_i$ do not change within integer intervals of the form $\{0,\ldots,n_1-1\}$, $\{n_1,\ldots,n_1+n_2-1\}$, and so on. While the shorter representation \eqref{eq:AIS} is more convenient for theoretical purposes, formulation~\eqref{eq:AIS:nt} is the one used in practice (see Section~\ref{sec:numerical}).

	Interestingly, the AIS estimate \eqref{eq:AIS} may be seen as a weighted least-squares estimate minimizing the loss function $a \mapsto \sum_{i=1}^n w_i \left(g(X_i) - a\right)^2$.
	This perspective is key to understand control variates.
	
	\textbf{Control variates.} The control variates method is a variance reduction technique that consists in incorporating a new piece of information---the known values of the integrals of some control functions---in a basic Monte Carlo framework.
	Control variates are simply functions $h_1,\ldots,h_m \in L_2(f)$ with known integrals.
	Without loss of generality, assume that $\expec_f[h_j]=0$ for all $j=1,\ldots,m$.
	Let $h = (h_1,\ldots,h_m)^\top$ denote the $\reals^m$-valued function with the $m$ control variates as elements. 
	For any coefficient vector $\beta \in \reals^m$, we have $\expec_f[g - \beta^\top h] = \expec_f[g]$. Given an independent random sample $X_1,\ldots,X_n$ from $f$, any $\beta \in \reals^m$ therefore results in an unbiased estimator of $\expec_f[g]$ by
	\begin{equation}
	\label{eq:CV}
		I_n^{(\mathrm{cv})}(g,\beta) = \frac{1}{n} \sum_{i=1}^n \left( g(X_i) - \beta^\top h(X_i) \right).
	\end{equation}
	Provided the $m \times m$ covariance matrix $G = \E_f[h h^\top]$ is invertible, there is a unique coefficient vector $\beta^* \in \reals^m$ for which the variance of $I_n^{\mathrm{(cv)}}(g)$ is minimal and it is given by
	\begin{equation}
	\label{eq:beta*}
		\beta^* = \left(\E_f[h h^\top]\right)^{-1} \E_f[h g].
	\end{equation}
	This vector being generally unknown, it needs to be estimated from the particles $X_1,\ldots,X_n$. Casting the problem in an ordinary least squares framework leads to the control variate estimate
	\begin{equation}
	\label{eq:OLS}
	\begin{split}
		I_n^{\mathrm{(cv)}}(g) 
		&= I_n^{\mathrm{(cv)}} \bigl(g, \hat{\beta}_n^{\mathrm{(cv)}}\bigr)
		= \hat{\alpha}_n^{\mathrm{(cv)}}
		\qquad \text{where} \\
		\bigl( 
			\hat{\alpha}_n^{\mathrm{(cv)}}, \hat{\beta}_n^{\mathrm{(cv)}} 
		\bigr)
		&\in \argmin_{(a, b) \in \reals \times \reals^m} \frac{1}{n} \sum_{i=1}^n \left( g(X_i) - a - b^\top h(X_i) \right)^2.
	\end{split}
	\end{equation}
	The estimator $I_n^{\mathrm{(cv)}}(g)$ is well-defined provided the minimizer $\hat{\alpha}_n^{\mathrm{(cv)}}$ to \eqref{eq:OLS} is unique. This is the case if and only if there does not exist $b \in \reals^m$ such that $b^\top h(X_i) = 1$ for all $i = 1,\ldots,n$.
		
	The asymptotic distribution of $I_n^{\mathrm{(cv)}}(g)$ as $n \to \infty$ is the same as if the variance-minimizing vector $\beta^*$ were used in \eqref{eq:CV}. In particular, the asymptotic variance of $I_n^{\mathrm{(cv)}}(g)$ is $\sigma_m^2(g) / n$ where
	\[
		\sigma_m^2(g) = \min_{\beta \in \rset^m} 
		\expec_f \bigl[ (g-\expec_f[g]-\beta^\top h)^2 \bigr].
	\]
	Interestingly, when using only the first $\ell$ out of $m$ control variates, where $\ell \in \{0, 1, \ldots,m\}$, we have $\sigma_m^2(g) \le \sigma_\ell^2(g)$. In terms of asymptotic variance, it therefore never harms to add more control variates. Their construction will be addressed in Section~\ref{subsec:cv_practice}. 

\section{Combining adaptive importance sampling with control variates} \label{section:ais_and_cv}

\textbf{AISCV estimator.}
Consider the same integration problem $\expec_f[g] = \int gf \, \diff \lambda$ as in Section~\ref{sec:preliminaries}. With the idea of performing variance reduction when calculating integrals with respect to the posterior density in Bayesian inference, we incorporate control variates into the AIS estimate.
Let the particles $(X_i)_{i \ge 1}$ be generated according to a policy $(q_i)_{i \ge 0}$ as in Definition~\ref{def:policy}. Let $h = (h_1,\ldots,h_m)^\top$ be a vector of control variates, i.e., $h_j \in L_2(f)$ and $\expec_f[h_j] = 0$ for every $j = 1,\ldots,m$. Combining \eqref{eq:AIS} and \eqref{eq:CV}, the proposed estimate takes the form
\begin{equation}
\label{eq:Iaiscvgbetanorm}
	I^{\mathrm{(aiscv)}}_n(g, \beta) 
	= \frac{\sum_{i=1}^{n} w_i \left(g(X_i) - \beta^\top h(X_i)\right)}{\sum_{i=1}^{n} w_i}, 
\end{equation}
where $\beta \in \reals^m$ remains to be determined. To do so, the ordinary least-squares problem in \eqref{eq:OLS} is replaced by a weighted one, yielding the novel AISCV estimator
\begin{equation}
\label{eq:AISCV}
	\begin{split}
	I_n^{\mathrm{(aiscv)}}(g) 
	&= I_n^{\mathrm{(aiscv)}} \bigl(g, \hat{\beta}_n\bigr)
	= \hat{\alpha}_n \qquad \text{where} \\
	\bigl( \hat{\alpha}_n, \hat{\beta}_n \bigr)
	&\in \argmin_{(a, b) \in \reals \times \reals^m} \sum_{i=1}^n w_i \left( g(X_i) - a - b^\top h(X_i) \right)^2.
	\end{split}
\end{equation}
The estimator is well-defined only if the minimizer $\hat{\alpha}_n$ is unique---the minimizer $\hat{\beta}_n$ need not be. We will come back to this in the next paragraph.

As in \eqref{eq:AIS:nt}, the policy may be divided into $T$ stages in order to reduce the number of times the sampler needs to be updated. Stage $t = 1,\ldots,T$ has length $n_t$, with $\sum_{t=1}^T n_t = n$. Within each stage, the sampling density remains constant. In practice, this leads to the AISCV estimate in Algorithm~\ref{algo:aiscv}.

	\begin{algorithm}[h]
	\setstretch{1.25}
	\caption{Adaptive Importance Sampling with Control Variates (AISCV)}\label{alg:AISCV}
	\begin{algorithmic}[1]
		\Require integrand $g$, target density $f$ (up to a proportionality constant), number of stages $T \in \naturals^*$, allocation policy $(n_t)_{t=1}^T$, initial density $q_0$, update rule for the sampling policy
		
		\For{$t = 1,\dots,T$}
		\State Generate an independent random sample $X_{t,1},\dots,X_{t,n_t}$ from $q_{t-1}$ 
		\State Compute the vector of weights $(w_{t,i})_{i=1}^{n_t}$ where $w_{t,i} = f(X_{t,i})/q_{t-1}(X_{t,i})$
		\State Construct the matrix of control variates $H_t = \bigl( h_j(X_{t,i}) \bigr)_{i=1,\dots,n_t}^{j=1,\dots,m}$
		\State Evaluate the integrand in the particles: $ (g(X_{t,i}))_{i=1}^{n_t}$
		\State Update the sampler $q_t$ based on all previous particles $(X_{s,i} : s = 1,\ldots,t; i = 1,\ldots,n_s)$
		\EndFor
		\State Compute $(\hat{\alpha}_T, \hat{\beta}_T) 
		= \argmin_{(a,b) \in \reals \times \reals^m} \left\{ \sum_{t=1}^{T} \sum_{i=1}^{n_t} w_{t,i} \left(g(X_{t,i}) - a - b^\top h(X_{t,i})\right)^2 \right\}$
		\State\Return $I_n^{\mathrm{(aiscv)}}(g) = \hat{\alpha}_T$.
	\end{algorithmic}
	\label{algo:aiscv}
\end{algorithm}

\textbf{Quadrature rule.} 
The AIS estimate \eqref{eq:AIS} is a quadrature rule with quadrature points $X_i$ and quadrature weights proportional to the sampling weights $w_i$. The AISCV estimate \eqref{eq:AISCV} has the same property, but with adapted quadrature weights.
Let $e_n = (e_{n,i})_{i=1,\ldots,n}$ be the vector of residuals resulting from the weighted least-squares regression of the constant vector $\one_n = (1, \ldots, 1)^\top \in \reals^n$ on the control variates but without intercept:
\begin{equation}
\label{eq:eni}
\begin{split}
e_{n,i} 
&= 1 - \hat{\beta}_n(\one_n)^\top h(X_i) \qquad \text{where} \\
\hat{\beta}_n(\one_n)
&\in \argmin_{b \in \reals^m} \sum_{i=1}^n w_i \left(1 - b^\top h(X_i)\right)^2.
\end{split}
\end{equation}
Even though the vector $\hat{\beta}_n(\one_n)$ is not necessarily unique, the weighted least squares fit $(\hat{\beta}_n(\one_n)^\top h(X_i))_{i=1,\ldots,n}$ always is.
According to the next proposition, the quadrature weights are proportional to $(w_i e_{n,i})_{i=1,\ldots,n}$.

\begin{proposition}[AISCV quadrature rule]
	\label{prop:AISCV:quad}
	The minimizer $\hat{\alpha}_n$ in \eqref{eq:AISCV} is unique if and only if $e_n \ne 0$ in \eqref{eq:eni}. In that case, the AISCV estimate is
	\begin{equation}
	\label{eq:aiscv:quad}
	I_n^{\mathrm{(aiscv)}}(g) =
	\hat{\alpha}_n 
	= \frac{\sum_{i=1}^n w_i e_{n,i} g(X_i)}{\sum_{i=1}^n w_i e_{n,i}}.
	\end{equation}
\end{proposition}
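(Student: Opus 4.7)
The natural approach is a weighted Frisch--Waugh--Lovell style argument: derive the claimed quadrature formula from the first-order optimality conditions of \eqref{eq:AISCV} by ``partialling out'' the control variates from the constant.

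First I would introduce the matrix notation $W = \diag(w_1,\ldots,w_n)$, the design matrix $H = (h_j(X_i))_{i,j}$, and the response vector $Y = (g(X_i))_i$, and observe that $w_i > 0$ almost surely because $q_{i-1}(X_i) > 0$ and $f(X_i) > 0$. The first-order conditions for any minimizer $(\hat\alpha_n,\hat\beta_n)$ of \eqref{eq:AISCV} give, writing $r_i = g(X_i) - \hat\alpha_n - \hat\beta_n^\top h(X_i)$,
\begin{equation*}
\sum_{i=1}^n w_i r_i = 0 \quad\text{and}\quad \sum_{i=1}^n w_i h(X_i) r_i = 0.
\end{equation*}
Multiplying the first identity by $1$ and subtracting $\hat\beta_n(\one_n)^\top$ times the second gives $\sum_{i=1}^n w_i e_{n,i} r_i = 0$ because $e_{n,i} = 1 - \hat\beta_n(\one_n)^\top h(X_i)$.

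The key step is then to note that the definition of $\hat\beta_n(\one_n)$ in \eqref{eq:eni} yields, via its own first-order conditions, that $e_n$ is $W$-orthogonal to the columns of $H$, i.e.\ $\sum_{i=1}^n w_i e_{n,i} h(X_i) = 0$. Expanding $r_i$ in the identity $\sum_i w_i e_{n,i} r_i = 0$ and using this orthogonality collapses the $\hat\beta_n$ term, leaving
\begin{equation*}
\sum_{i=1}^n w_i e_{n,i} g(X_i) = \hat\alpha_n \sum_{i=1}^n w_i e_{n,i},
\end{equation*}
which is the quadrature formula, provided the denominator is nonzero. A short computation, using again the $W$-orthogonality of $e_n$ to the columns of $H$, shows that $\sum_i w_i e_{n,i} = \sum_i w_i e_{n,i}^2$, so the denominator is strictly positive precisely when $e_n \neq 0$.

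It remains to establish the uniqueness equivalence. If $e_n \neq 0$, the computation above determines $\hat\alpha_n$ explicitly, hence uniquely. Conversely, if $e_n = 0$, then $\one_n$ lies in the column span of $H$: there exists $b_\star \in \reals^m$ with $b_\star^\top h(X_i) = 1$ for all $i$, so for any candidate minimizer $(\hat\alpha_n,\hat\beta_n)$ and any $t \in \reals$, the pair $(\hat\alpha_n + t, \hat\beta_n - t b_\star)$ is also a minimizer, and $\hat\alpha_n$ is not unique. I do not expect any serious obstacle; the only minor care needed is that $\hat\beta_n(\one_n)$ may itself be nonunique (if $H$ is rank-deficient), but the fitted values $\hat\beta_n(\one_n)^\top h(X_i)$ and hence the residuals $e_{n,i}$ are always uniquely defined, which is all that the argument uses.
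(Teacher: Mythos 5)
Your argument is correct and arrives at the same formula, but by a genuinely different (and more elementary) route than the paper. The paper first profiles out $b$: it introduces the $W$-orthogonal projection matrix $\Pi$ onto the column space of $H$ (Proposition~\ref{prop:another_representation}), reduces \eqref{eq:AISCV} to the one-dimensional quadratic $a \mapsto \|W^{1/2}(I_n-\Pi)(g^{(n)}-a\one_n)\|_2^2$, reads off the uniqueness criterion from the positivity of the leading coefficient $e_n^\top W e_n$, and then simplifies the resulting ratio using $\Pi^\top W = W\Pi$ and $\Pi^2=\Pi$. You instead work directly with the stationarity conditions of the joint problem \eqref{eq:AISCV} and of the auxiliary problem \eqref{eq:eni}, and combine them through the $W$-orthogonality of $e_n$ to the columns of $H$ --- a weighted Frisch--Waugh--Lovell argument. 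The two proofs ultimately rest on the same identities ($e_n^\top W H = 0$ and $e_n^\top W \one_n = e_n^\top W e_n$), but yours dispenses with the projection-matrix machinery and the intermediate matrix representation, making it more self-contained; your converse direction for uniqueness (exhibiting the one-parameter family $(\hat\alpha_n+t,\hat\beta_n-tb_\star)$ of minimizers when $e_n=0$) is also more explicit than the paper's, which only notes that the coefficient of $a^2$ vanishes. Both proofs share the implicit assumption that every $w_i$ is strictly positive --- without it, $e_n^\top W e_n>0$ need not follow from $e_n\ne 0$ and the residuals $e_{n,i}$ at points with $w_i=0$ are not pinned down --- and you at least state this assumption explicitly.
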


If $e_n = 0$, then there exists $b \in \reals^m$ such that $b^\top h(X_i) = 1$ for all $i = 1,\ldots,n$. In that case, the minimizer $\hat{\alpha}_n$ in \eqref{eq:AISCV} is not unique and the AISCV estimate is not well-defined. To remedy this, one can for instance reduce the number of control variates. This issue already occurs with the ordinary control variate estimator in \eqref{eq:CV}.

Rather than requiring a different weighted least squares problem for every integrand $g$ as in \eqref{eq:AISCV}, the quadrature rule in \eqref{eq:aiscv:quad} only involves a single weighted least squares problem \eqref{eq:eni}, whatever $g$. 
Given the quadrature weights, calculating the AISCV estimate for a novel integrand only requires the evaluations of that function on the sampled particles, making the whole procedure a \textit{post-hoc} scheme. The steps in case the sampling policy is divided into $T$ stages are given in Algorithm~\ref{alg:AISCV:quad}, which gives the same result as Algorithm~\ref{algo:aiscv}, but with less effort if multiple integrands $g$ are into play.

\begin{algorithm}
	\setstretch{1.25}
	\caption{Quadrature Rule -- AISCV \textit{post-hoc} scheme}\label{alg:AISCV:quad}
	\begin{algorithmic}[1]
		\Require integrand $g$, $T \in \naturals^*$, allocation policy $(n_t)_{t=1}^T$, weights $(w_t)_{t=1}^{T}$ with $w_t = (w_{t,i})_{i=1}^{n_t}$, matrices $(H_t)_{t=1}^T$ with $H_t = \bigl(h_j(X_{t,i})\bigr)_{i=1,\ldots,n_t}^{j=1,\ldots,m}$, particles $(X_{t,i} : t=1,\ldots,T; i=1,\ldots,n_t)$	
		\State Compute $\hat{\beta}_n(\one_n) = \argmin_{b \in \reals^m} \sum_{t=1}^T \sum_{i=1}^{n_t} w_{t,i} \left( 1 - b^\top h(X_{t,i})\right)^2 $ 
		\State Compute $u_t = \diag(w_t) [\one_{n_t} - H_t \hat{\beta}_n(\one_n)]$ for $t = 1,\ldots,T$
		\State Compute $s = \sum_{t=1}^T \sum_{i=1}^{n_t} u_{t,i}$
		\State Compute weights $v_{t,i} = u_{t,i} / s$ for $t = 1,\ldots,T$ and $i=1,\ldots,n_t$
		\State\Return $I_T^{\mathrm{(aiscv)}}(g) = \sum_{t=1}^T \sum_{i=1}^{n_t} v_{t,i} g(X_{t,i})$
	\end{algorithmic}
\end{algorithm}

\section{Theoretical properties of the AISCV estimate}
\label{section:main_results}

Here we point out several theoretical properties of the novel AISCV estimate. A first point is that the integration rule is exact on the linear span of the control variates and the constant function.

\begin{proposition}[Exact integration] 
	\label{prop:exact}
	For integrands of the form $g = \alpha + \beta^\top h$ for $\alpha \in \reals$ and $\beta \in \reals^m$, the AISCV estimate is exact: $I^{\mathrm{(aiscv)}}_n(g) = \alpha = \expec_f[g].$
\end{proposition}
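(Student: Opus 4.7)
\textbf{Proof plan for Proposition~\ref{prop:exact}.}

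The cleanest route is to exploit Proposition~\ref{prop:AISCV:quad}: provided $e_n \ne 0$ so that $I_n^{\mathrm{(aiscv)}}(g)$ is well-defined, we have the closed form
\[
	\hat{\alpha}_n = \frac{\sum_{i=1}^n w_i e_{n,i} g(X_i)}{\sum_{i=1}^n w_i e_{n,i}}.
\]
Substituting $g(X_i) = \alpha + \beta^\top h(X_i)$ gives
\[
	\hat{\alpha}_n
	= \alpha + \beta^\top \, \frac{\sum_{i=1}^n w_i e_{n,i} h(X_i)}{\sum_{i=1}^n w_i e_{n,i}},
\]
so everything reduces to showing that $\sum_{i=1}^n w_i e_{n,i} h(X_i) = 0$. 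But this is exactly the first-order (normal-equation) condition associated with the weighted least-squares problem~\eqref{eq:eni} that defines $\hat{\beta}_n(\one_n)$: differentiating $b \mapsto \sum_i w_i(1 - b^\top h(X_i))^2$ and equating to $0$ yields $\sum_i w_i (1 - \hat{\beta}_n(\one_n)^\top h(X_i)) h(X_i) = 0$, i.e. $\sum_i w_i e_{n,i} h(X_i) = 0$. Combined with the display above this gives $\hat{\alpha}_n = \alpha = \expec_f[g]$.

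As a sanity check (and an alternative proof that avoids the quadrature-rule formula), one can argue directly from the optimization problem~\eqref{eq:AISCV}. With $g = \alpha + \beta^\top h$, the choice $(a, b) = (\alpha, \beta)$ makes the objective equal to $0$, which is its minimum. Hence any minimizer $(\hat{\alpha}_n, \hat{\beta}_n)$ also achieves $0$, meaning $\hat{\alpha}_n + \hat{\beta}_n^\top h(X_i) = g(X_i) = \alpha + \beta^\top h(X_i)$ for every $i$, so $(\hat{\alpha}_n - \alpha) \one_n = -(\hat{\beta}_n - \beta)^\top h$ evaluated at the particles. If $\hat{\alpha}_n \ne \alpha$, dividing by $\hat{\alpha}_n - \alpha$ produces a vector $b \in \reals^m$ with $b^\top h(X_i) = 1$ for all $i$, forcing $e_n = 0$ and contradicting the uniqueness of $\hat{\alpha}_n$ guaranteed by Proposition~\ref{prop:AISCV:quad}.

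The only subtlety, and thus the main thing to flag, is the well-definedness hypothesis $e_n \ne 0$: in the degenerate case $e_n = 0$ the minimizer $\hat{\alpha}_n$ is not unique and the statement must be read as ``whenever the AISCV estimate is well-defined''. No concentration or probabilistic argument is needed; the result is deterministic in the particles $X_1,\ldots,X_n$ and the weights $w_1,\ldots,w_n$, and depends only on the algebraic structure of weighted least squares.
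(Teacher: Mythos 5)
Your proposal is correct. The paper's own proof is a one-liner along the lines of your second ("sanity check") argument: since the objective in \eqref{eq:AISCV} is nonnegative and vanishes at $(a,b)=(\alpha,\beta)$, that pair is a minimizer, and (implicitly, under the well-definedness condition $e_n \ne 0$) the uniqueness of the first coordinate forces $\hat{\alpha}_n=\alpha$. You spell out the uniqueness step that the paper leaves tacit, which is the right thing to do — without it, "a minimizer is $(\alpha,\beta)$" does not by itself pin down $\hat{\alpha}_n$. Your primary route via the quadrature representation of Proposition~\ref{prop:AISCV:quad} and the normal equation $\sum_i w_i e_{n,i} h(X_i)=0$ is a genuinely different and equally valid derivation; it has the small added virtue of showing explicitly that exactness is a property of the quadrature weights $w_i e_{n,i}$ themselves (they annihilate each $h_j$ and sum to the normalizing constant), at the cost of invoking Proposition~\ref{prop:AISCV:quad} first. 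Your flag about the degenerate case $e_n=0$ matches the paper's own caveat following Proposition~\ref{prop:AISCV:quad}. No gaps.
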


A second property is that we may apply arbitrary invertible linear transformations to the control variates without changing the AISCV estimate. This can be advantageous computationally, to make the underlying weighted least squares problem more stable numerically. Also, it means that without loss of generality, we may assume that the control variates are uncorrelated and have unit variance, which simplifies the theoretical performance analysis.

\begin{proposition}[Invariance]
\label{prop:invariance}
	If the matrix $A \in \reals^{m \times m}$ is invertible, then the AISCV estimate based on the control variates $A h$ is the same as the one based on $h$.
\end{proposition}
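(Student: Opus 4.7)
The approach is a straightforward change of variables in the weighted least squares problem~\eqref{eq:AISCV}. Let $\tilde{h} = Ah$ and let $(\tilde{\alpha}_n, \tilde{\beta}_n)$ denote a minimizer of
\begin{equation*}
(a,b) \mapsto \Phi_A(a,b) := \sum_{i=1}^n w_i \left( g(X_i) - a - b^\top (Ah)(X_i) \right)^2
= \sum_{i=1}^n w_i \left( g(X_i) - a - (A^\top b)^\top h(X_i) \right)^2.
\end{equation*}
The key observation is that since $A$ is invertible, so is $A^\top$, hence the reparametrization $c = A^\top b$ is a bijection from $\reals^m$ onto itself. Consequently,
\begin{equation*}
\Phi_A(a,b) = \Phi_I(a, A^\top b),
\end{equation*}
where $\Phi_I$ is the objective function in~\eqref{eq:AISCV} based on $h$ itself.

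First, I would use this identity to conclude that for any fixed $a \in \reals$, the infima of $b \mapsto \Phi_A(a,b)$ and $c \mapsto \Phi_I(a,c)$ are equal. Taking the infimum over $a$ then shows that the two full optimization problems share the same value, and that the set of first-coordinate minimizers is identical: $\hat{\alpha}_n$ is a minimizing intercept for the $h$-problem if and only if it is one for the $(Ah)$-problem.

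Second, I would verify that uniqueness of the intercept is preserved. By Proposition~\ref{prop:AISCV:quad}, uniqueness is equivalent to $e_n \neq 0$, i.e., to the non-existence of $c \in \reals^m$ with $c^\top h(X_i) = 1$ for all $i$. Under the bijection $c = A^\top b$, such a $c$ exists if and only if there exists $b$ with $b^\top (Ah)(X_i) = 1$ for all $i$. Hence the well-definedness condition transfers between the two problems. Combining both points, we obtain that $\hat{\alpha}_n$ is unique under one parametrization exactly when it is under the other, and its value is the same. The argument is purely algebraic, and I do not anticipate a substantive obstacle; the only care needed is to keep track of the fact that, while $\hat{\beta}_n$ may be non-unique, the correspondence $\hat{\beta}_n \leftrightarrow A^\top \hat{\beta}_n$ matches the two solution sets one-to-one, so the invariance of $\hat{\alpha}_n$ follows.
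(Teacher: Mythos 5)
Your argument is correct and is essentially the paper's own proof: both rest on the observation that $b^\top (Ah) = (A^\top b)^\top h$ and that $b \mapsto A^\top b$ is a bijection of $\reals^m$, so the two least-squares problems have identical sets of minimizing intercepts. Your additional remark that the uniqueness condition transfers under this bijection is a nice (if implicit in the paper) bit of extra care, but does not change the route.
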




Our main result is a non-asymptotic bound on the error of the AISCV estimate for $\int gf \, \diff \lambda$ when $\int g^2 f \, \diff \lambda $ is finite. First, we introduce some assumptions and definitions. 

The first condition that is required concerns the policy given by the AIS part of the algorithm. It is supposed that any element from the policy should dominate the function $f$.

	\begin{assumption}[Dominated measures] \label{as:bound_of_density}
		There exists $c \ge 1$ such that, for all $x \in \reals^d$ and for any $i=1,\dots,n$, we have $f(x) \leq c \cdot q_i(x)$.
			\end{assumption}	

This assumption represents a \textit{safe} approach to importance sampling, as the policy will always allow to sample in places where $f$ is positive. A well-known and well-spread \citep{hesterberg1995weighted,owen+z:2000,delyon2021safe} technique to achieve such a defensive strategy is to a use mixture density $q_i = (1-\eta) f_i +  \eta q_0 $ where $\eta \in (0,1)$ and where $q_0$ has sufficiently heavy tails to dominate $f$. Such a mixture allows to choose the densities $f_i$ with some flexibility using in principle any AIS algorithm. 
Second, the control variates shall be linearly independent and bounded.

	\begin{assumption}[Control variates] \label{as:bounded_cv}
		We have $\sup_{x : f(x) > 0} | h_j(x) | < \infty$ for all $j=1,\ldots,m$. The matrix $G = \int hh^\top f \, \diff \lambda $ is invertible.
	\end{assumption}

The previous condition allows to define the standardized vector of control variates as $\hbar  = G^{-1/2} h$. By Proposition~\ref{prop:invariance}, this change does not affect the AISCV estimate. The orthonormal control variates $\hbar$ will play a key role through the following quantity $$	B = \sup_{x : f(x) > 0} \|\hbar (x) \|_2^2.$$
The quadratic form $\|\hbar (x) \|_2 ^2 = h(x)^\top G^{-1} h(x)$ is referred to as the \textit{leverage function} in ordinary linear regression as it quantifies the influence of a training point $x$ on the prediction of the observed response. It is invariant with respect to invertible linear transformations of the control variate vector.

Assumption~\ref{as:bounded_cv} and the fact that the integrand $g$ is square integrable with respect to $f$ allows to define the residual function $\varepsilon = g -  \int g f \, \diff \lambda  - h^\top \beta^*$ where $\beta^*$ has been introduced in \eqref{eq:beta*} as a minimizer of the residual variance. Since we work in the space $L^2(f)$, we assume without loss of generality that $g$ and $h$ vanish outside $\{x : f(x) > 0\}$ and we put $\varepsilon(x) = 0$ for $x \in \reals^d$ such that $f(x) = 0$. 
The residual function $\varepsilon$ should satisfy the following tail condition.

\begin{assumption}[Residual tail]
	\label{as:sub_gauss}
	There exists $\tau > 0$ such that, for all $t > 0$ and all integer $i \ge 1$, we have $\prob [ | w_i \varepsilon(X_i) | >t  \mid \mathcal F_{i-1} ] \leq 2  \exp( -  t  ^2 / ( 2\tau^2)  )$.
\end{assumption}

The previous assumption concerns both the function $\varepsilon$ and the policy sequence $(q_i)_{i \ge 0}$. 
 Since $\mathbb E [  w_i \varepsilon(X_i) \mid \mathcal F_{i-1} ] = 0$, it is implied by the so-called sub-Gaussian condition \citep{boucheron2013concentration} that $ \mathbb E [ \exp( \lambda  w_i \varepsilon( X_i ) )  \mid \mathcal F_{i-1} ] \leq \exp( - \lambda  ^2 \tau^2 /2 ) $ for any $\lambda \in \reals$.
In the proof of Theorem~\ref{th:concentration_inequality}, Assumption~\ref{as:sub_gauss} allows to derive concentration bounds on residual-based sums using recent results from  \cite{jin2019short,LelucPortierSegers2021}. We are now in position to state our main result on the error of the AISCV estimate. 


	\begin{theorem}[Concentration inequality for AISCV estimate] \label{th:concentration_inequality}
		If Assumptions \ref{as:bound_of_density}, \ref{as:bounded_cv} and \ref{as:sub_gauss} hold, then, for any $\delta \in (0, 1)$ and for all $n \geq  C_1  c^2   B  \log(10m/\delta)$, we have, with probability at least $1-\delta$, that
		\[
		\left| 
		I^{\mathrm{(aiscv)}}_{n}(g)
		- \int_{\reals^d} g(x) f(x) \, \diff x
		\right|
		\le 
		C_2    \tau  \sqrt{ \frac{ \log(10/\delta)  } {  n}  } 
		+  C_3 c B \tau  \frac{ \log(10m/\delta) }{  n},
		\]
		where $C_1$, $C_2$, $C_2$ are universal constants specified in the proof.
	\end{theorem}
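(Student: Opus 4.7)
By Proposition~\ref{prop:invariance} I would standardize the control variates, replacing $h$ by $\hbar = G^{-1/2} h$, so that $\int \hbar \hbar^\top f\,\diff\lambda = I_m$ and $\sup_{x:f(x)>0} \|\hbar(x)\|_2^2 = B$. Write $\alpha = \int gf\,\diff\lambda$ and decompose $g = \alpha + \beta^{*\top} \hbar + \varepsilon$ with $\E_f[\varepsilon] = 0$ and $\E_f[\hbar\varepsilon] = 0$. By Proposition~\ref{prop:exact} and the linearity of the quadrature rule in $g$, the error reduces to $\hat\alpha_n - \alpha = I^{\mathrm{(aiscv)}}_n(\varepsilon)$. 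Using $e_{n,i} = 1 - \hbar(X_i)^\top \hat\beta_n(\one_n)$ together with the closed form $\hat\beta_n(\one_n) = M_n^{-1} V_n$, where
\[
M_n = \sum_{i=1}^n w_i \hbar(X_i)\hbar(X_i)^\top, \quad V_n = \sum_{i=1}^n w_i \hbar(X_i), \quad S_n = \sum_{i=1}^n w_i \varepsilon(X_i), \quad T_n = \sum_{i=1}^n w_i \hbar(X_i) \varepsilon(X_i),
\]
and $W_n = \sum_{i=1}^n w_i$, the key algebraic identity is
\[
\hat\alpha_n - \alpha = \frac{S_n - V_n^\top M_n^{-1} T_n}{W_n - V_n^\top M_n^{-1} V_n}.
\]
All four quantities $V_n$, $S_n$, $T_n$, and $W_n - n$ are sums of $\mathcal{F}_i$-martingale increments under the policy, because $\E[w_i \hbar(X_i)\mid \mathcal F_{i-1}] = 0$, $\E[w_i \varepsilon(X_i)\mid \mathcal F_{i-1}] = 0$, $\E[w_i \hbar(X_i)\varepsilon(X_i)\mid \mathcal F_{i-1}] = 0$, and $\E[w_i\mid \mathcal F_{i-1}] = 1$.

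Next I would handle each term with an appropriate martingale concentration inequality. For $S_n$, Assumption~\ref{as:sub_gauss} together with a scalar sub-Gaussian Azuma-type inequality gives $|S_n| \lesssim \tau\sqrt{n \log(1/\delta)}$. For $V_n$, the increments $w_i \hbar(X_i)$ satisfy $\|w_i \hbar(X_i)\|_2 \le c\sqrt{B}$ by Assumptions~\ref{as:bound_of_density} and \ref{as:bounded_cv}, so the norm-sub-Gaussian martingale concentration of Jin et al.\ \cite{jin2019short} yields $\|V_n\|_2 \lesssim \sqrt{cB n \log(m/\delta)} + cB\log(m/\delta)$. For $T_n$, the increments satisfy $\|w_i \hbar(X_i) \varepsilon(X_i)\|_2 \le \sqrt{B}\,|w_i\varepsilon(X_i)|$, hence they are norm-sub-Gaussian with parameter $\tau\sqrt{B}$ conditionally on $\mathcal F_{i-1}$, and the same concentration inequality gives $\|T_n\|_2 \lesssim \tau\sqrt{B n \log(m/\delta)} + cB\tau\log(m/\delta)$.

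The central technical obstacle, and the novel piece of the argument, is controlling the smallest eigenvalue of $M_n/n$. Since $\E[w_i \hbar(X_i)\hbar(X_i)^\top \mid \mathcal F_{i-1}] = I_m$ and $\|w_i \hbar(X_i)\hbar(X_i)^\top\|_{\mathrm{op}} \le cB$, I would prove a matrix Chernoff / Bernstein lower bound in the martingale setting extending Tropp's i.i.d.\ inequality \cite{Tropp2015}. The Laplace-transform/Lieb-concavity machinery of Tropp extends to predictable sequences once one conditions at each step on $\mathcal F_{i-1}$ and invokes the operator Jensen inequality, yielding $\lambda_{\min}(M_n/n) \ge 1/2$ with probability $1-\delta/5$ provided $n \ge C_1 c B \log(m/\delta)$, which is where the threshold in the theorem arises (the $c^2$ factor enters via the operator-norm bound $cB$ combined with the variance proxy). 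On that event $\|M_n^{-1}\|_{\mathrm{op}} \le 2/n$.

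Finally I would union-bound the five events ($S_n$, $V_n$, $T_n$, $W_n$, and $\lambda_{\min}(M_n)$) each at level $\delta/5$. On the intersection, $|V_n^\top M_n^{-1} V_n| \le 2\|V_n\|_2^2/n$ is a lower-order perturbation, so the denominator $W_n - V_n^\top M_n^{-1} V_n \ge n/2$ for $n$ large enough. The numerator satisfies
\[
|S_n - V_n^\top M_n^{-1} T_n| \le |S_n| + \frac{2\|V_n\|_2 \|T_n\|_2}{n},
\]
and plugging in the bounds above produces a leading term $\tau\sqrt{n\log(1/\delta)}$ and a cross term of order $\tau B \log(m/\delta) + cB\tau\log(m/\delta)/\sqrt{n}\cdot\ldots$; after dividing by the denominator and absorbing lower-order contributions into the $cB\tau \log(10m/\delta)/n$ remainder, the stated inequality follows. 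The main difficulty will be the matrix martingale lower bound; the rest is careful bookkeeping of the norm-sub-Gaussian bounds and the interplay of their Gaussian and Bernstein regimes.
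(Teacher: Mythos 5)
Your proposal is correct and follows essentially the same route as the paper: the same standardization $\hbar = G^{-1/2}h$, the same reduction to the residual $\varepsilon$, the same five concentration events (scalar sub-Gaussian bounds for $S_n$ and $W_n$, the norm-sub-Gaussian martingale Hoeffding inequality of \cite{jin2019short} for $V_n$ and $T_n$, and a martingale extension of Tropp's matrix Chernoff bound for $\lambda_{\min}(M_n)$), combined by a union bound and Cauchy--Schwarz. The only difference is organizational: you work with the quadrature-rule representation $\hat\alpha_n = (S_n - V_n^\top M_n^{-1}T_n)/(W_n - V_n^\top M_n^{-1}V_n)$ built on the uncentered Gram matrix $M_n$, whereas the paper bounds $P_{n,w}\{\varepsilon\} + P_{n,w}\{h\}^\top(\beta^* - \hat\beta_n)$ via the centered matrix $H_W^\top W H_W$, which costs it an extra intermediate step but yields the same bound up to universal constants.
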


\begin{remark}[Understanding $\tau$]
	\label{rk:tau}
	The quantity $\tau$ in Assumption~\ref{as:sub_gauss} is related to the conditional variance $ \mathbb E [ w_i^2 \varepsilon^2(X_i) \mid \mathcal F_{i-1} ]$. 
	They actually coincide when $ w_i \varepsilon(X_i)$ is Gaussian. For a policy satisfying Assumption \ref{as:bound_of_density}, $ \mathbb E [ w_i^2 \varepsilon^2(X_i) \mid \mathcal F_{i-1} ]\leq c \sigma_m^2 $  which for certain combinations of integrands and control functions scales as $m^{-s/d}$ \cite{PortierSegers2019} where the parameter $s$ represents the degree of smoothness of $g$. 
\end{remark}

\begin{remark}[Convergence rates]
	Consider an asymptotic regime where the number of control variates $m$ tends to infinity with the sample size $n$. 
The AISCV estimate improves upon the AIS method $(m=0)$, which has rate $1 / \sqrt{n}$, as soon as $\tau+\tau B \log (m) / \sqrt{n} \rightarrow 0$. To recover the same order of an oracle estimate with rate $\tau / \sqrt{n}$, one must have $B \log (m)= O(\sqrt{n})$ 
	as $n \rightarrow \infty$.
\end{remark}

\section{Practical considerations} 
\label{sec:practical}

This section presents ways to build control variates using either families of polynomials or general functions based on Stein's method, with a highlight on computations in the Bayesian framework.
	\subsection{Control variate constructions} \label{subsec:cv_practice}
	
	\textbf{Orthogonal polynomials.} 
	When the target density $f$ can be decomposed as a product of univariate densities $f = p_1 \otimes \cdots \otimes p_d$, multidimensional control functions may be constructed based on univariate ones. This happens for instance for the uniform distribution over the unit cube $[0,1]^d$ or with uncorrelated Gaussian distributions on $\rset^d$. Such univariate control variates may be easily constructed using families of polynomials \citep{gautschi2004orthogonal}, such as Legendre polynomials for the uniform distribution on $[0,1]$ and Hermite polynomials for the Gaussian distribution on $\rset$. This technique can also be used when $f$ is dominated by another density $f^*$ having the said product form by transforming zero-mean control variates $h^*$ with respect to $f^*$ via $h = h^* f^* / f$. 
	
	Let $(h_{1}, \ldots, h_{k})$ be a vector of univariate control functions with respect to a density $p$, i.e., $\expec_p[h_j]=0$ for all $j=1,\ldots,k$. Let $h_0=1$ denote the constant function equal to one. For a multi-index $\ell = (\ell_{1}, \ldots, \ell_{d})$ in $\{0,\ldots, k\}^{d} \setminus \{(0, \ldots, 0)\}$,  multivariate controls with respect to $p^{\otimes d}$ are built by forming tensor products of the form $ h_{\ell}(x_{1}, \ldots, x_{d})= h_{\ell_{1}}(x_{1}) \cdots  h_{\ell_{d}}(x_{d})$ , yielding a total number of $m=(k+1)^{d}-1$ control functions. Alternative approaches yielding smaller control spaces consist of imposing $\ell_{j} = 0$ for all but a small number 
	of coordinates $j = 1,\ldots,d$ or by the constraint $\ell_1 + \cdots + \ell_d \leq Q$ for some $Q \geq 1$. 
	
	\textbf{Stein control variates.} In the general case where one has only access to the evaluations of $f$, control variates may be constructed using Stein's method.
	The technique relies on the gradient $\nabla_{x} \log f(x)$ which can either be directly computed (see the example of Bayesian regression below) or which may be available through automatic differentiation provided in popular API's such as Tensorflow and PyTorch \citep{abadi2016tensorflow,paszke2017automatic}. 
	Let $\Delta_x = \nabla_{x}^\top \nabla_{x}$ denote the Laplace operator. By definition, the second-order Stein operator $\mathcal{L}$ \citep{stein1972bound,gorham2015measuring} associated to the density $f$ is defined by:	
\begin{align*}
\forall \varphi \in \mathcal{C}^2(\rset^d,\rset), \quad 
(\mathcal{L}\varphi)(x) = \Delta_{x} \varphi (x) + \nabla_{x} \varphi (x)^\top \nabla_{x} \log f(x).
\end{align*} 
The transformation guarantees that $\expec_f[\mathcal{L}\varphi]=0$ for all $\varphi$ with weak regularity conditions \citep{mira2013zero}. 
	Therefore, we can build infinitely many control variates $h_{\varphi} = \mathcal{L}\varphi$ from given functions $\varphi$. One simple way is to let $\varphi$ be a polynomial with bounded total degree: for a degree vector $\bm{\alpha} = (\alpha_1,\ldots,\alpha_d) \in \mathbb{N}^d$ with $\alpha_1+\cdots+\alpha_d \le Q$, define $\varphi_{\bm{\alpha}}(x) = x_1^{\alpha_1} \cdots x_d^{\alpha_d}$.
	Given the dimension $d$ and the total degree $Q$, there are $m = \binom{d+Q}{d} - 1$ such degree vectors, yielding the associated control variates $h_{\bm{\alpha}} = h_{\varphi_{\bm{\alpha}}}$. 
	For fast computation, note that, writing $\phi_{\bm{\alpha}}(x) =  \varphi_{\bm{\alpha}}(x) \mathds{1}_d$, $D_1(x) = \diag(\alpha_1/x_{1},\ldots,\alpha_d/x_{d})$ and $D_2(x) = \diag(\alpha_1(\alpha_1-1)/x_{1}^2,\ldots,\alpha_d(\alpha_d-1)/x_{d}^2)$, we have $\nabla_{x} \varphi_{\bm{\alpha}}(x) = D_1(x)\phi_{\bm{\alpha}}(x)$ and $\Delta_{x} \varphi_{\bm{\alpha}}(x) = \mathds{1}_d^\top (D_2(x)\phi_{\bm{\alpha}}(x)).$ 
	In practice, all combinations of $\bm{\alpha}$ are stored in a matrix $A \in \mathbb{N}^{m \times d}$. 
	
\subsection{Bayesian inference}
\label{sec:bayes}

Given data $\mathcal{D}$ and a parameter of interest $\theta \in \Theta \subset \rset^d$, posterior integrals take the form $\int_{\rset^d} g(\theta) p(\theta|\mathcal{D}) \, \diff\theta,$ where $p(\theta|\mathcal{D}) \propto \ell(\mathcal{D}|\theta)\pi(\theta)$ is the posterior distribution, proportional to a prior $\pi(\theta)$ and a likelihood function $\ell(\mathcal{D}|\theta)$. For instance, when $g(\theta)=\theta$, the integral above recovers the posterior mean. Stein control variates involve the computation of the gradient of the log-posterior $\nabla_{\theta} \log p(\theta|\mathcal{D})$, which implicitly relies on the score function $\nabla_{\theta} \log \ell(\mathcal{D}|\theta)$. We point out two common examples---linear and logistic regression---where these functions are easy to compute.
	
\textbf{Bayesian linear regression.}
Consider a linear regression problem comprised of observations $X \in \rset^{N \times d}$ with labels $y \in \rset^N$. In the Gaussian fixed design setting, the predictor $x_i$ produces the response $y_i = x_i^\top \theta + \varepsilon_i$ where $\varepsilon_1,\ldots,\varepsilon_N \sim \mathcal{N}(0,\sigma^2)$ are centered Gaussian noises. 
The likelihood $\ell(X,y|\theta)$ is proportional to $(\sigma^2)^{-N/2} \exp(-(y-X \theta)^\top (y-X \theta)/(2 \sigma^2))$, yielding the score function $\nabla_{\theta} \log \ell(X,y|\theta) = X^\top(y - X\theta) /(2 \sigma^2)$.
	
\textbf{Bayesian logistic regression.}
Next, consider the logistic regression problem comprised of observations $X \in \rset^{N \times d}$ with associated binary labels $y \in \{0,1\}^N$. Letting $\sigma(s) = 1 / (1 + e^{-s})$ denote the sigmoid function, the likelihood function is  $\ell(X,y|\theta) = \prod_{i=1}^N \sigma(\theta^\top x_i)^{y_i} (1-\sigma(\theta^\top x_i))^{1-y_i}$. The score function is simply $\nabla_{\theta} \log \ell(X,y|\theta) = X^\top(y - \sigma(X\theta))$.
	
\section{Numerical illustration}
\label{sec:numerical}

To compare the finite-sample performance of the AIS and AISCV estimators, we first present in Section~\ref{ref:synthetic} synthetic data examples involving the integration problem over the unit cube $[0, 1]^d$ and then with respect to some Gaussian mixtures as in \cite{cappe2008adaptive}. The goal is to compute $\int g f \, \diff \lambda$ for vectors of integrands $g:\rset^d \to \rset^p$. We consider various dimensions $d > 1$ and several choices for the number of control variates $m$. Section \ref{sec:real_world} deals with real-world datasets in the context of Bayesian inference. For ease of reproducibility, the code, numerical details and additional results are available in the supplementary material.
	
\textbf{Parameters.} In all simulations, the sampling policy is taken within the family of multivariate Student $t$ distributions of degree $\nu$ denoted by $\{q_{\mu,\Sigma_0}\,:\, \mu \in \reals^d \}$ with $\Sigma_0 =\sigma_0 I_d(\nu-2)  /\nu$ and $\nu >2, \sigma_0 >0$.
Similarly to \cite{portier2018asymptotic}, the mean $\mu_t$ is updated at each stage $t=1,\ldots, T$ by the generalized method of moments (GMM), leading to $\mu_t	= (\sum_{s=1}^t \sum_{i=1}^{n_s} w_{s,i }X_{s,i}) /  ({\sum_{s=1}^t \sum_{i=1}^{n_s} w_{s,i}})$. The allocation policy is fixed to $n_t = \num{1000}$ and the number of stages is $T \in \{5;10;20;30;50\}$. The different Monte Carlo estimates are compared by their mean squared error (MSE) obtained over $100$ independent replications.
	
	\subsection{Synthetic examples} \label{ref:synthetic}
	
\textbf{Integration on $[0,1]^d$.} We seek to integrate functions $g$ with respect to the uniform density $f(x) = 1$ for $x \in [0, 1]^d$ in dimensions $d \in \{4;8\}$. We rely on Legendre polynomials for the control variates. Consider the integrands $g_1(x) = 1 + \sin(\pi(2 d^{-1} \sum_{i=1}^d x_{i} - 1))$, $g_2(x) = \prod_{i=1}^d (2/\pi)^{1/2} x_i^{-1} \mathrm{e}^{-\log(x_i)^2/2}$ and  $g_3(x) = \prod_{i=1}^d \log(2) 2^{1-x_i}$, all of which integrate to $1$ on $[0, 1]^d$. 
None of the integrands is a linear combination of the control variates. The policy parameters are $\mu_0 = (0.5,\ldots, 0.5)\in \reals^d$, $\nu = 8$, and $\sigma_0 = 0.1$. The control variates are built out of tensor products of Legendre polynomials where the degree $\ell_j$ equals $0$ for all but two coordinates, leading to a total number of $m = kd + k^2 d(d - 1)/2$ control variates. The maximum degree in each variable is $k=6$, yielding $m=240$ and $m=1056$ control variates in dimensions $d=4$ and $d=8$ respectively. Figure~\ref{fig:synthetic_1} presents the boxplots of the AIS and AISCV estimates. The error reduction obtained thanks to the control variates is huge: the AISCV estimate has a mean squared error smaller than the one of the AIS estimate by a factor at least $10$ and up to $100$ (see Table~\ref{tab:synthetic_1} in the supplement).

	\begin{figure}[h]
		\centering
		\begin{subfigure}[b]{0.245\textwidth}
			\centering
			\includegraphics[width=\textwidth]{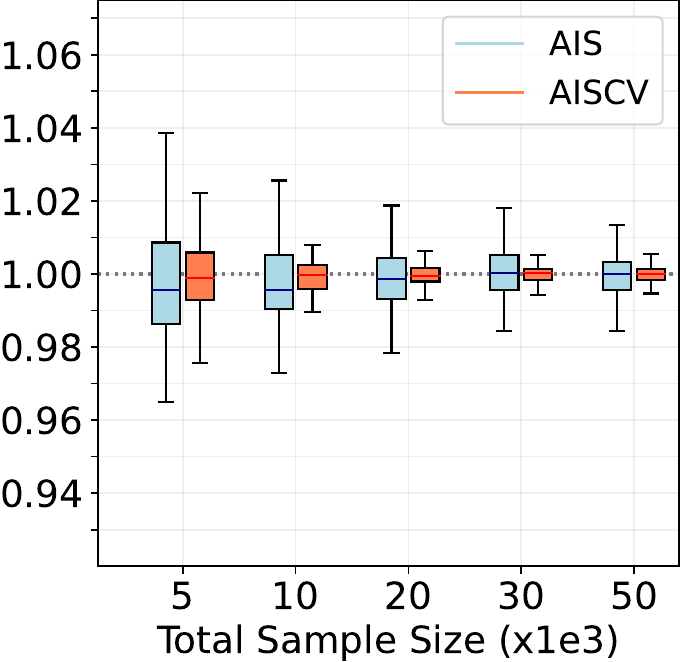}
			\caption{$g_1 (d=4)$}
			\label{fig:sin_d4}
		\end{subfigure}
		\hfill
		\begin{subfigure}[b]{0.245\textwidth}
			\centering
			\includegraphics[width=\textwidth]{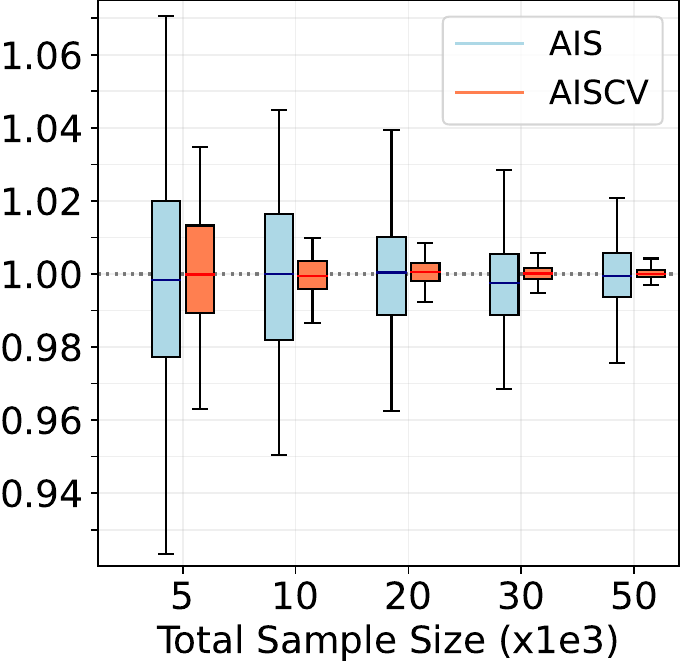}
			\caption{$g_1 (d=8)$}
			\label{fig:sin_d8}
		\end{subfigure}
		\hfill
		\begin{subfigure}[b]{0.245\textwidth}
			\centering
			\includegraphics[width=\textwidth]{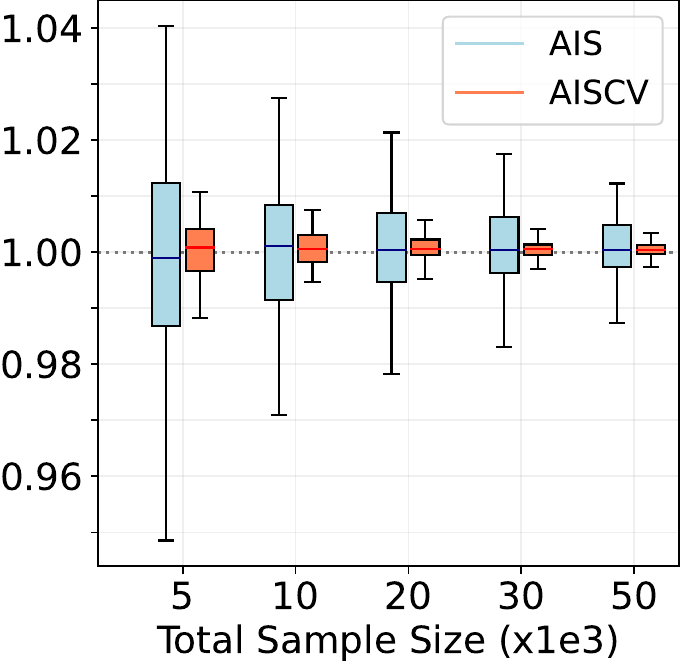}
			\caption{$g_2 (d=4)$}
			\label{fig:lognorm_d4}
		\end{subfigure}
		\hfill
		\begin{subfigure}[b]{0.245\textwidth}
			\centering
			\includegraphics[width=\textwidth]{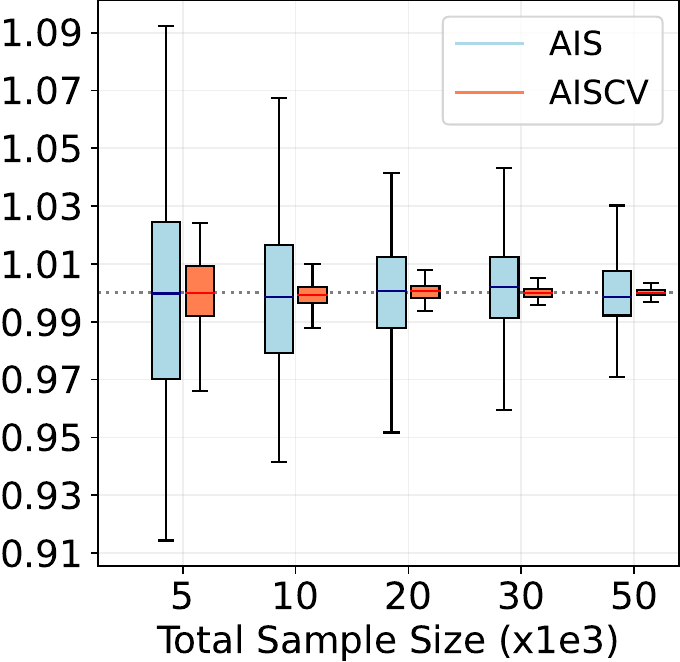}
			\caption{$g_3 (d=8)$}
			\label{fig:exp_d8}
		\end{subfigure}
		\caption{Integration on $[0, 1]^d$: boxplots of estimates $I^{\mathrm{(ais)}}_{n}(g)$ and $I^{\mathrm{(aiscv)}}_{n}(g)$ with integrands $g_1, g_2, g_3$ in dimensions $d \in \{4;8\}$ obtained over $100$ replications. The true integral equals $1$.}
		\label{fig:synthetic_1}        
	\end{figure}

\textbf{Gaussian mixture $f$ and Stein control variates.} In this setting we assume we only have access to the evaluations of the target density $f$. 
We consider the classical example introduced in \cite{cappe2008adaptive} where $f$ is a mixture of two Gaussian distributions. The control variates are built using Stein's method (Section~\ref{subsec:cv_practice}) out of polynomials of total degree at most $Q \in \{2;3\}$, leading to a number of control variates $m \in \{14;34\}$ in dimension $d=4$ and $m \in \{44;164\}$ in dimension $d=8$ respectively. We consider two cases: an isotropic and an anisotropic one. 
	
{\em Isotropic case.} Let $f_{\Sigma}(x) = 0.5 \Phi_{\Sigma}(x-\mu) + 0.5 \Phi_{\Sigma}(x+\mu)$ where $\mu = (1,\ldots,1)^\top/2\sqrt{d}, \Sigma = I_d/d$ and $\Phi_{\Sigma}$ is the multivariate normal density function with zero mean and covariance matrix $\Sigma$. The Euclidean distance between the two mixture centers is $1$, independently of $d$. The initial density $q_0$ is the multivariate Student $t$ distribution with mean $(1,-1,0,\ldots,0)/ \sqrt{d}$ and variance $(5/d)I_d$. The initial mean value differs from the null vector to prevent the naive algorithm using the initial density from having good results due to the symmetrical set-up. \\ 
{\em Anisotropic case.} In this case, the mixture is unbalanced and each Gaussian is anisotropic. The target density is $f_{V}(x) = 0.75 \Phi_{V}(x-\mu) + 0.25 \Phi_{V}(x+\mu)$ where $\mu = (1,\ldots,1)^\top/2\sqrt{d}$ and $V = \diag(10,1,\ldots,1)/d$. The initial density $q_0$ is the same as for the isotropic case.

Figure \ref{fig:synthetic_2} presents the evolution of the logarithm of the mean squared error $\| \hat I(g) - I(g)\|_2^2$. Once again, the AISCV estimators are the clear winners with a mean squared error smaller by a factor up to $1000$ for the anisotropic case (see Table~\ref{tab:synthetic_2} in the supplement).
	
	\begin{figure}[h]
		\centering
		\begin{subfigure}[b]{0.245\textwidth}
			\centering
			\includegraphics[width=\textwidth]{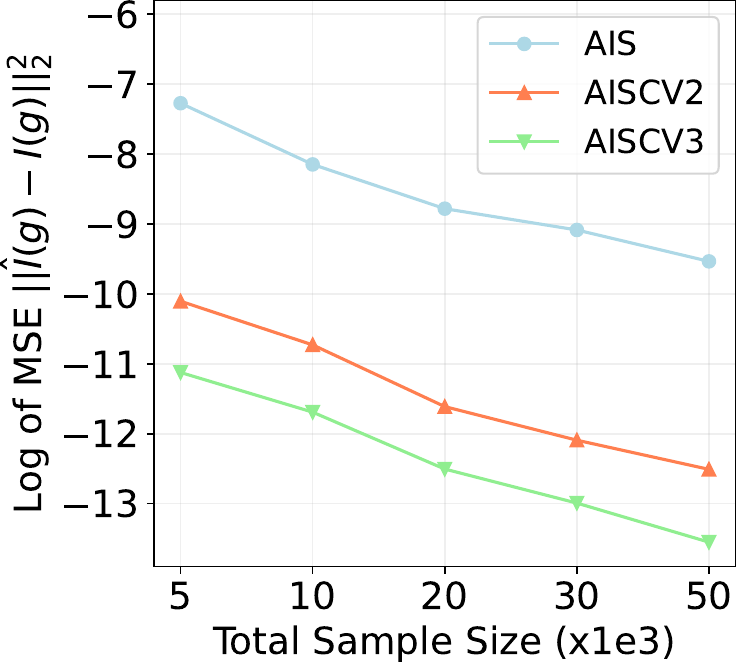}
			\caption{$f_{\Sigma} (d=4)$}
			\label{fig:iso_d4}
		\end{subfigure}
		\hfill
		\begin{subfigure}[b]{0.245\textwidth}
			\centering
			\includegraphics[width=\textwidth]{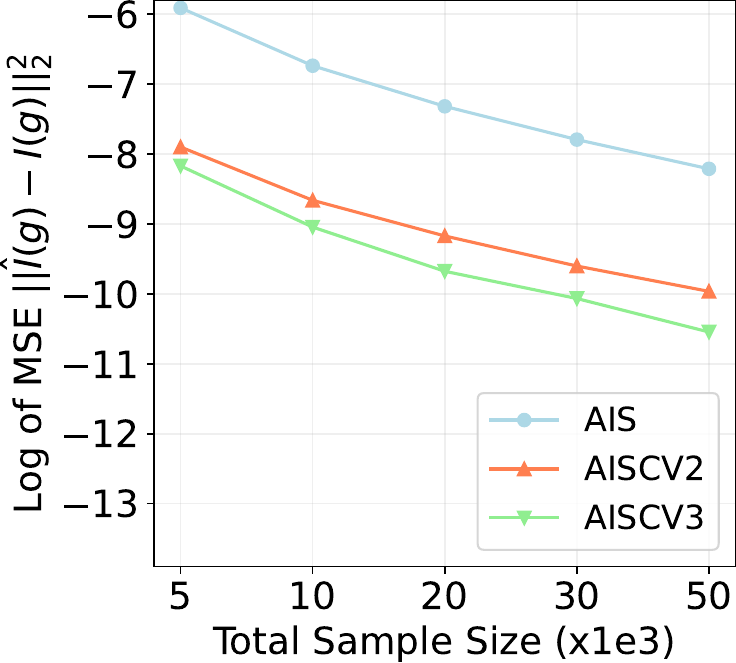}
			\caption{$f_{\Sigma} (d=8)$}
			\label{fig:iso_d8}
		\end{subfigure}
		\hfill
		\begin{subfigure}[b]{0.245\textwidth}
			\centering
			\includegraphics[width=\textwidth]{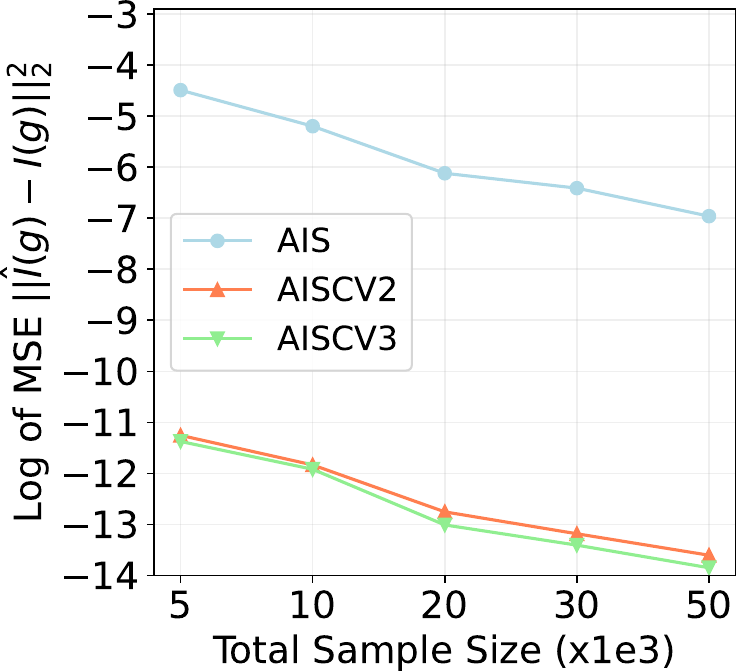}
			\caption{$f_V (d=4)$}
			\label{fig:ani_d4}
		\end{subfigure}
		\hfill
		\begin{subfigure}[b]{0.245\textwidth}
			\centering
			\includegraphics[width=\textwidth]{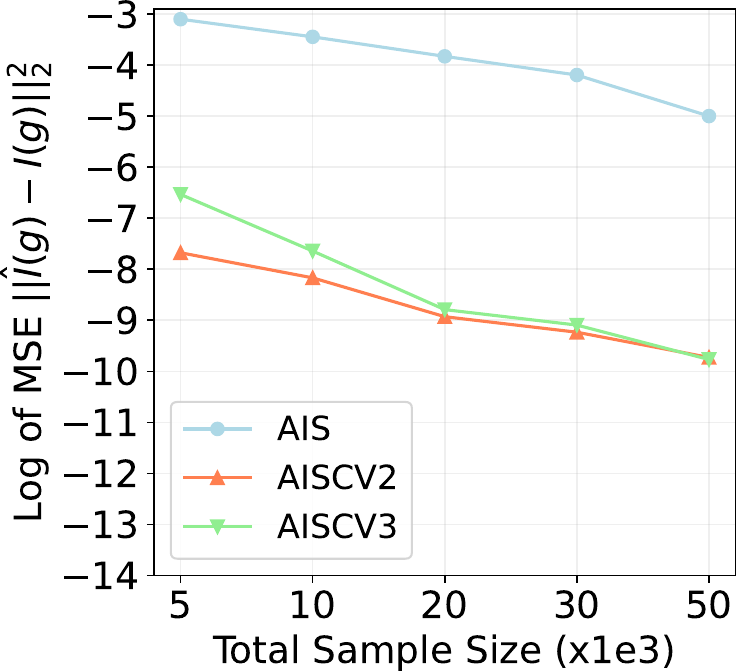}
			\caption{$f_{V} (d=8)$}
			\label{fig:ani_d8}
		\end{subfigure}
		\caption{Gaussian mixture density: Logarithm of $\| \hat I(g) - I(g)\|_2^2$ for $g(x)=x$ with target isotropic $f_{\Sigma}$ and anisotropic $f_V$ in dimensions $d \in \{4;8\}$ obtained over $100$ replications.}
		\label{fig:synthetic_2}        
	\end{figure}

\subsection{Real-world examples} \label{sec:real_world}

We place ourselves in the framework of Bayesian linear regression (Section~\ref{sec:bayes}) with features $X \in \rset^{N \times d}$ and continuous responses $y \in \rset^N$. The posterior distribution $p(\theta|\mathcal{D})$ involves a Gaussian prior $\pi(\theta) \sim \mathcal{N}(\mu_a,\Sigma_a)$ and a likelihood function $\ell(\mathcal{D}|\theta)$ proportional to $(\sigma^2)^{-N/2} \exp(-(y-X \theta)^\top (y-X \theta)/(2 \sigma^2))$. The noise level is fixed and taken sufficiently large at $\sigma = 50$ to account for general priors. 
The posterior distribution is Gaussian too: $\mathcal{N}(\mu_b,\Sigma_b)$ with $\mu_b = \Sigma_b (\sigma^{-2} X^\top y + \Sigma_a^{-1} \mu_a)$ and $\Sigma_b = (\sigma^{-2} X^\top X + \Sigma_a^{-1})^{-1}$. The integrand is $g(\theta) = \sum_{i=1}^d \theta_i^2$ and the control variates are built with the Stein operator (Section~\ref{subsec:cv_practice}) out of monomials with total degree $Q \in \{1;2\}$, leading to the AISCV1 and AISCV2 estimators respectively. 

\textbf{Datasets and parameters.} Classical datasets from \cite{dua2019uci} are considered : \textit{housing} ($N=506;d=13;m \in \{12;104\}$); \textit{abalone} ($N=\num{4177};d=8;m \in \{7;44\}$); \textit{red wine} ($N=\num{1599};d=11;m \in \{10;77\}$); and \textit{white wine} ($N=\num{4898};d=11;m \in \{10;77\}$). The initial density is the multivariate Student $t$ distribution with $\nu = 10$ degrees of freedom, zero mean and covariance matrix $\Sigma_b$.

	\begin{figure}[h]
		\centering
		\begin{subfigure}[b]{0.245\textwidth}
			\centering
			\includegraphics[width=\textwidth]{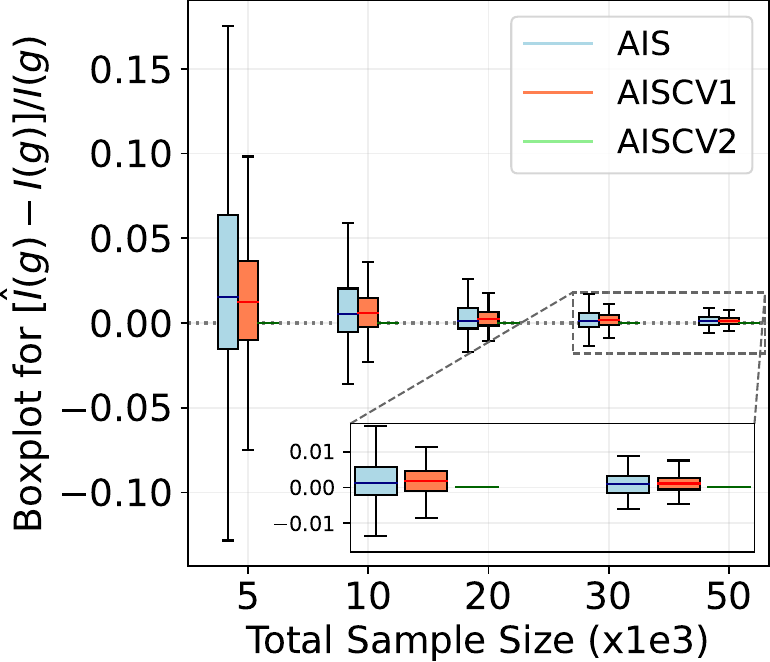}
			\caption{\centering Housing}
			\label{fig:housing}
		\end{subfigure}
		\hfill
		\begin{subfigure}[b]{0.245\textwidth}
			\centering
			\includegraphics[width=\textwidth]{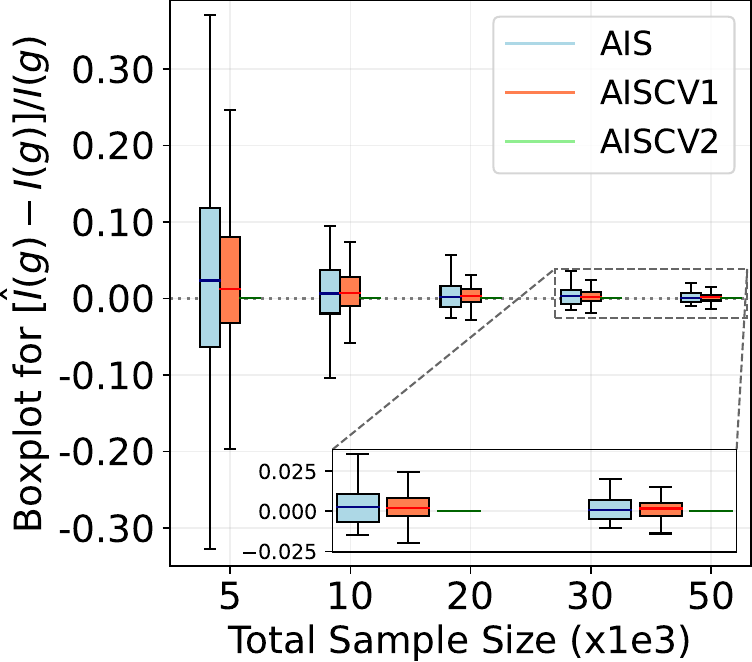}
			\caption{\centering Abalone}
			\label{fig:abalone}
		\end{subfigure}
		\hfill
		\begin{subfigure}[b]{0.245\textwidth}
			\centering
			\includegraphics[width=\textwidth]{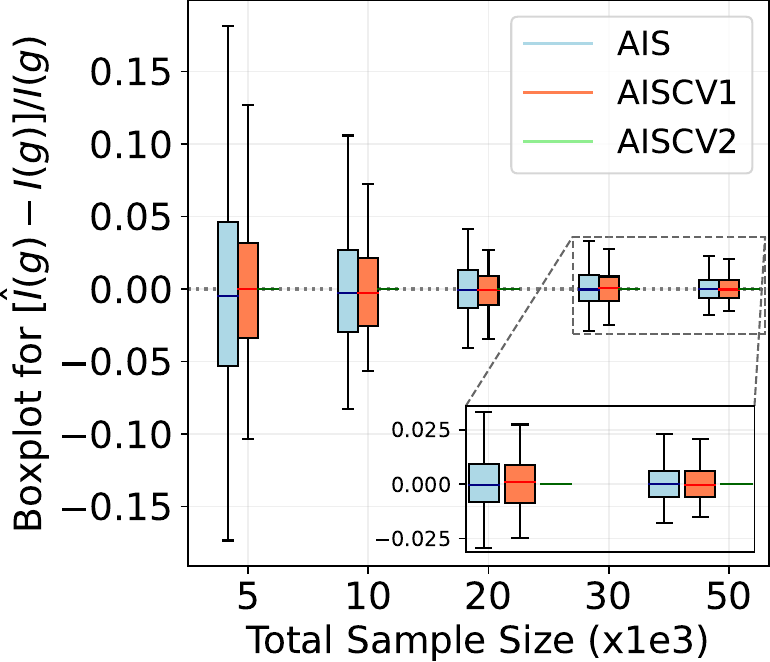}
			\caption{\centering Red Wine}
			\label{fig:mpg}
		\end{subfigure}
		\hfill
		\begin{subfigure}[b]{0.245\textwidth}
			\centering
			\includegraphics[width=\textwidth]{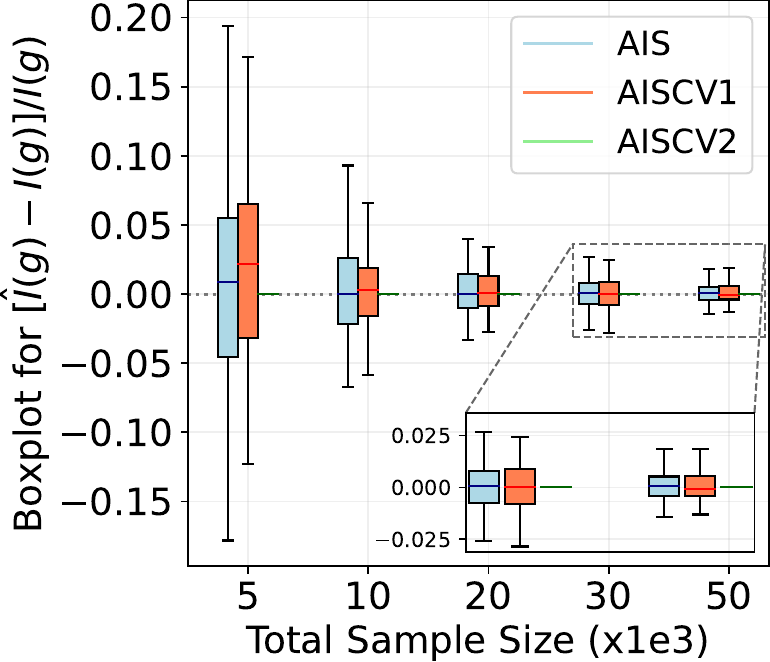}
			\caption{\centering White Wine}
			\label{fig:trya}
		\end{subfigure}
		\caption{Bayesian linear regression: boxplots of $(\hat I(g) - I(g))/I(g)$ for $g(\theta)=\sum_{j=1}^d \theta_j^2$.} 
		\label{fig:BLR_real}        
	\end{figure}
	
\textbf{Results.}
Figure \ref{fig:BLR_real} presents the boxplots of the relative error $(\hat I(g) - I(g))/I(g)$, revealing the benefits of control variates even with polynomials of degree $Q=1$. When $Q=2$, the error of the AISCV2 estimate is virtually zero (see Table~\ref{tab:BLR_real} in the supplement), in line with Proposition~\ref{prop:exact}. The mean squared error of the AISCV1 estimate is smaller than that of the AIS estimate by a factor ranging between $2$ and $10$. 

\section{Discussion} \label{sec:conclu}

While control variates are a well-known tool for Monte Carlo integration, standard methods do not allow the distribution of particles to evolve throughout the algorithm, as is the case for sequential methods. Within the standard adaptive importance sampling framework, we have developed a weighted least-squares procedure to improve numerical integration by incorporating control variates. The underlying adapted weights of this quadrature rule do not depend on the integrand and our non-asymptotic bound highlights the benefits of combining adaptive importance sampling with control variates. Different ways for constructing control variates are proposed. The method is fit for computing integrals with respect to the posterior density in Bayesian analysis, as the target density only needs to be known up to a multiplicative constant.

A limitation of the combined AISCV approach is that it requires the user to make quite some design choices, notably the sampling policy for the AIS part and the control variates for the CV part. These culminate into the factor $\tau$ in Assumption~\ref{as:sub_gauss}, which appears prominently in the error bound in Theorem~\ref{th:concentration_inequality} and which can be interpreted roughly as the standard deviation of $w \varepsilon$, where $w$ is the importance weight -- well behaved when the policy is well-chosen in relation to the target density -- and where $\varepsilon$ is some residual function -- well behaved when the control variates are well-chosen with respect to the integrand. Further, choosing too many control variates may result in an ill-conditioned empirical Gram matrix or in overfitting. The least-squares solution could become unstable, requiring some kind of regularization, such as the LASSO \cite{LelucPortierSegers2021}.

\section*{Acknowledgements}
The authors are grateful to the Area chair and three anonymous Reviewers for their valuable comments and interesting suggestions.
Aigerim Zhuman gratefully acknowledges a research grant from the \emph{National Bank of Belgium}.
	

\bibliographystyle{apalike}
\bibliography{referencelistAISCV}

\newpage

\newpage

\begin{center}
{\large {\bf\textsc{Supplementary material: A Quadrature Rule combining \\ Control Variates and Adaptive Importance Sampling}}}
\end{center}

Technical Lemmas and auxiliary results are provided in Appendix \ref{app:auxiliary_results}. Appendix \ref{app:properties} collects additional theoretical properties of the AISCV estimator while the technical proofs of the Propositions and main theorem are presented in Appendix \ref{app:proofs}. Finally, Appendix \ref{app:num_supp} presents additional numerical values associated to the numerical experiments on synthetic examples and real-world datasets for Bayesian linear regression.

\appendix
\startcontents[sections]
\printcontents[sections]{l}{1}{\setcounter{tocdepth}{2}}

\newpage
\section{Auxiliary results} \label{app:auxiliary_results}

\subsection{Lemmas on (Random) Matrices inequalities}

\begin{definition}
	Let $A$ and $\Psi$ be Hermitian matrices of the same dimension. We say that $A \preceq \Psi$ if and only if $\Psi - A$ is positive semidefinite.
\end{definition}

\begin{definition}[\cite{Tropp2015}, Definition 2.1.2]
	Let $f: I \to \reals$ where $I$ is an interval of the real line. Consider a $d \times d$ Hermitian matrix $A$ whose eigenvalues are contained in $I$. Define a $d \times d$ Hermitian matrix $f(A)$, called the standard matrix function, using an eigenvalue decomposition of $A$, by
	\begin{equation*}
		f(A) = 
		Q 
		\begin{bmatrix}
			f(\lambda_1) &           & \\
			& \ddots    & \\
			&			 & f(\lambda_d)
		\end{bmatrix}
		Q^* 
		\quad\text{where}\quad
		A = 
		Q 
		\begin{bmatrix}
			\lambda_1 &           & \\
			& \ddots    & \\
			&			  & \lambda_d
		\end{bmatrix}
		Q^*. 			
	\end{equation*}
\end{definition}

\begin{remark}
	The matrix exponential $e^A$ and the matrix logarithm $\log(A)$ are the standard matrix functions.
\end{remark}

\begin{lemma}[\cite{Tropp2015}, Example 8.3.4] 
	\label{lm:monotone_map}
	The trace exponential map is monotone:
	\begin{equation*}
		A \preceq \Psi \text{ implies } \tr e^A \leq \tr e^{\Psi}
	\end{equation*}
	for all Hermitian matrices $A$ and $\Psi$.
\end{lemma}

\begin{lemma}[\cite{Tropp2015}, Proposition 3.2.1]
	\label{lm:key}
	For any random Hermitian matrix $Y$, for all $t \in \reals$, we have
	\begin{align*}
		\prob\left( \lambda_{\min}(Y) \leq t \right) \leq \inf_{\theta < 0} e^{-\theta t} \E[\tr(e^{\theta Y})]. 
	\end{align*}
\end{lemma}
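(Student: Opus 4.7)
The plan is to run a matrix Chernoff argument: convert the tail event on $\lambda_{\min}(Y)$ into a tail event on an exponential quantity, then apply Markov's inequality. The sign of $\theta$ is critical here; we pick $\theta < 0$ precisely so that multiplication by $\theta$ reverses the direction of the eigenvalue inequality and turns a lower tail bound into an upper one.

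First I would show the key pointwise/operator identity: for any Hermitian matrix $A$ and any scalar $\theta < 0$, one has $\lambda_{\max}(\theta A) = \theta \lambda_{\min}(A)$, since multiplying by a negative scalar reverses the ordering of the eigenvalues. Consequently, on the event $\{\lambda_{\min}(Y) \leq t\}$, multiplying by $\theta < 0$ yields $\theta \lambda_{\min}(Y) \geq \theta t$, i.e., $\lambda_{\max}(\theta Y) \geq \theta t$. Applying the (scalar) exponential, which is monotone increasing, gives
\[
\exp\bigl(\lambda_{\max}(\theta Y)\bigr) \geq e^{\theta t}.
\]

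Next I would relate this scalar quantity to the trace exponential. Since the eigenvalues of $e^{\theta Y}$ are the exponentials of the eigenvalues of $\theta Y$, one has $\exp(\lambda_{\max}(\theta Y)) = \lambda_{\max}(e^{\theta Y})$. Because $e^{\theta Y}$ is positive semidefinite, its largest eigenvalue is dominated by the sum of all eigenvalues, i.e.\ by $\tr(e^{\theta Y})$. Therefore on the event $\{\lambda_{\min}(Y) \leq t\}$ we have $\tr(e^{\theta Y}) \geq e^{\theta t}$, which yields the inclusion of events
\[
\{\lambda_{\min}(Y) \leq t\} \subseteq \bigl\{\tr(e^{\theta Y}) \geq e^{\theta t}\bigr\}.
\]

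Finally I would apply Markov's inequality to the nonnegative random variable $\tr(e^{\theta Y})$ to get $\prob(\tr(e^{\theta Y}) \geq e^{\theta t}) \leq e^{-\theta t}\,\E[\tr(e^{\theta Y})]$, and take the infimum over $\theta < 0$ to conclude. The only substantive point in the proof is keeping track of the direction of the inequality under multiplication by a negative scalar; everything else is essentially the scalar Chernoff recipe combined with the elementary fact that the maximum eigenvalue of a positive semidefinite matrix is at most its trace. No invocation of the operator-monotonicity lemma (Lemma~\ref{lm:monotone_map}) is needed for this statement, since $Y$ is deterministic inside the expectation and we only use the scalar exponential bound.
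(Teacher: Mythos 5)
Your proof is correct and is exactly the standard matrix Laplace transform argument used to establish this result in the cited source (Tropp, Proposition 3.2.1); the paper itself states the lemma by citation without reproducing a proof. Each step checks out: the sign reversal $\lambda_{\max}(\theta Y) = \theta\,\lambda_{\min}(Y)$ for $\theta<0$, the spectral mapping identity $\exp(\lambda_{\max}(\theta Y)) = \lambda_{\max}(e^{\theta Y})$, the domination of the top eigenvalue of the positive definite matrix $e^{\theta Y}$ by its trace, and the final application of Markov's inequality followed by the infimum over $\theta<0$.
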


\begin{lemma}[\cite{Tropp2015}, Lemma 5.4.1]
	\label{lm:bound_log} 
	Assume that $A$ is a random matrix with $0 \leq \lambda_{\min}(A)$ and, for some constant $L > 0$, $\lambda_{\max}(A) \leq L$. Then, for all $\theta \in \reals$, 
	\begin{equation*}
		\log(\E[e^{\theta A}]) \preceq \eta(\theta) \E[A], \quad \eta(\theta) = L^{-1}(e^{\theta L} - 1).
	\end{equation*}	
\end{lemma}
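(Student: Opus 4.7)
\textbf{Proof proposal for Lemma~\ref{lm:bound_log}.} The plan is to reduce the matrix inequality to a scalar convexity bound, lift it through the functional calculus, then pass through the conditional expectation, and finally use operator monotonicity of the logarithm together with the scalar inequality $\log(1+x)\le x$. The key scalar fact is that the convex function $x \mapsto e^{\theta x}$ lies below its chord on $[0,L]$:
\begin{equation*}
	e^{\theta x} \le 1 + \frac{e^{\theta L}-1}{L}\,x = 1 + \eta(\theta)\,x,
	\qquad x \in [0,L].
\end{equation*}
This holds for every $\theta \in \reals$ (the sign of $\eta(\theta)$ is immaterial since convexity of $x\mapsto e^{\theta x}$ does not depend on it).

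Next I would transfer this inequality to the matrix level. Because the eigenvalues of $A$ lie in $[0,L]$, applying the standard matrix function from Definition~\ref{def:matrixfun} to both sides of the scalar inequality yields
\begin{equation*}
	e^{\theta A} \preceq I + \eta(\theta)\,A \qquad \text{almost surely.}
\end{equation*}
Taking conditional expectation given $\mathcal{F}$ preserves the semidefinite order (since $v^*\E[M\mid\mathcal{F}]v = \E[v^*Mv\mid\mathcal{F}] \ge 0$ a.s.\ whenever $v^*Mv\ge 0$ a.s.), so
\begin{equation*}
	\E[e^{\theta A} \mid \mathcal{F}] \preceq I + \eta(\theta)\,\E[A\mid\mathcal{F}].
\end{equation*}

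The last step uses two properties of the matrix logarithm. First, $\log$ is operator monotone on positive definite matrices; both sides above are positive definite (the eigenvalues of $I+\eta(\theta)\E[A\mid\mathcal{F}]$ are bounded below by $1+\eta(\theta)L = e^{\theta L}>0$ since $\eta(\theta)\lambda \ge \eta(\theta)L$ when $\eta(\theta)\le 0$ and $\eta(\theta)\lambda \ge 0$ when $\eta(\theta)\ge 0$). Therefore
\begin{equation*}
	\log\bigl(\E[e^{\theta A}\mid\mathcal{F}]\bigr) \preceq \log\bigl(I+\eta(\theta)\E[A\mid\mathcal{F}]\bigr).
\end{equation*}
Second, applying the scalar inequality $\log(1+t)\le t$ for $t>-1$ through the functional calculus (both operands are diagonalized in the same basis as $\E[A\mid\mathcal{F}]$) gives $\log(I+\eta(\theta)\E[A\mid\mathcal{F}]) \preceq \eta(\theta)\E[A\mid\mathcal{F}]$. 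Chaining the two inequalities yields the claim.

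The only subtle ingredient is the operator monotonicity of $\log$, which I would invoke as a black box; the rest is verification that the matrix objects stay in the regime where $\log$ is well-defined, which reduces to checking $\eta(\theta)\lambda_{\min}(\E[A\mid\mathcal{F}]) > -1$ when $\theta<0$, and this follows directly from $\lambda_{\max}(A)\le L$. No real obstacle arises, and the argument is essentially the conditional version of Tropp's proof of Lemma~5.4.1.
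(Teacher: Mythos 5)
Your proposal is correct and is essentially the paper's argument: the paper's proof simply defers to Tropp's Lemma~5.4.1 and notes that conditional expectation preserves the semidefinite order, which is exactly the one nontrivial adaptation you identify and justify. Writing out the chord bound $e^{\theta x} \le 1 + \eta(\theta) x$ on $[0,L]$, the transfer rule, and the final step via operator monotonicity of $\log$ together with $\log(1+t) \le t$ (equivalent to Tropp's use of $1 + a \le e^{a}$) just makes explicit what the citation leaves implicit.
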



\begin{lemma}[\cite{Tropp2015}, Corollary 3.4.2]
	\label{lm:concave}   
	Let $\Psi$ be a fixed Hermitian matrix and $A$ a random Hermitian matrix of the same dimension. Then 
	\begin{equation*}
		\E\left[\tr\{\exp(\Psi+A)\} \right] \leq \tr\left[ \exp\{\Psi + \log(\E[e^A])\} \right].
	\end{equation*}
\end{lemma}

\subsection{Inequalities for martingales increments and empirical Gram matrices}
\begin{lemma}[Hoeffding inequality for norm-subGaussian martingale increments]
	\label{lem:HffnsbG}
	Let the $d$-dimensional random vectors $Z_1,\ldots,Z_n$ and the natural filtration $\mathcal F_n  = \sigma (Z_1, \ldots ,Z_n)$, $\mathcal F_0 = \{\Omega, \emptyset\}$,
	be such that, for all $i = 1,\ldots n $,  $ \mathbb E [ Z_i | \mathcal F_{i-1} ]  = 0 $ and
	\begin{equation}
		\label{eq:HffnsbG}
		\forall t \ge 0, \forall i = 1,\ldots,n, \qquad
		\prob(\norm{Z_i}_2 \ge t | \mathcal F_{i-1} ) \le 2 \exp \left( - \tfrac{t^2}{2\sigma^2} \right)
	\end{equation}
	for some $\sigma > 0$. Then for any $\delta > 0$, with probability at least $1-\delta$, we have
	\[
	\norm{\sum_{i=1}^n Z_i}_2 \le K \sigma \sqrt{n \log(2d/\delta)},
	\]
	where $K=3$.
\end{lemma}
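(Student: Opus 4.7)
The plan is to reduce this vector concentration problem to a matrix concentration problem via the Jordan--Wielandt/Paulsen dilation, then apply a matrix Chernoff-type argument that exploits the auxiliary lemmas on the trace exponential. A naive coordinate-wise Azuma--Hoeffding followed by a union bound would bring in a $\sqrt{d}$ factor via Pythagoras; to obtain the claimed $\sqrt{\log(2d/\delta)}$ dependence one has to absorb the dimension inside the trace of the matrix MGF.

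\textbf{Step 1: Dilation.} For each $Z_i\in\reals^d$, introduce the $(d+1)\times(d+1)$ Hermitian matrix
\[
\tilde Z_i = \begin{pmatrix} 0 & Z_i^\top \\ Z_i & 0 \end{pmatrix},
\]
whose only nonzero eigenvalues are $\pm\|Z_i\|_2$. Consequently $\lambda_{\max}\bigl(\sum_{i=1}^n \tilde Z_i\bigr) = \|\sum_{i=1}^n Z_i\|_2$, and $\E[\tilde Z_i\mid\mathcal{F}_{i-1}]=0$ since $\E[Z_i\mid\mathcal{F}_{i-1}]=0$.

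\textbf{Step 2: Conditional matrix MGF bound.} The norm-subGaussian hypothesis \eqref{eq:HffnsbG} implies that all eigenvalues of $\tilde Z_i$ are, conditionally on $\mathcal{F}_{i-1}$, sub-Gaussian with parameter $\sigma$. By a matrix Hoeffding-type argument (expanding $\E[e^{\theta\tilde Z_i}\mid\mathcal{F}_{i-1}]$ as a power series and controlling each even matrix moment by $\E[\|Z_i\|_2^{2k}\mid\mathcal{F}_{i-1}]\, I \le c_k\sigma^{2k}I$), one obtains an absolute constant $c$ such that
\[
\log \E[e^{\theta \tilde Z_i}\mid\mathcal{F}_{i-1}] \preceq c\,\theta^2 \sigma^2 I \qquad (\theta\in\reals).
\]

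\textbf{Step 3: Iteration and Chernoff bound.} Applying Lemma~\ref{lm:concave} with $\Psi = \theta\sum_{i=1}^{n-1}\tilde Z_i$ and $A = \theta\tilde Z_n$, and then peeling off conditional expectations one index at a time while using the MGF bound above and the monotonicity in Lemma~\ref{lm:monotone_map}, yields
\[
\E\!\left[\tr \exp\!\Bigl(\theta\sum_{i=1}^n \tilde Z_i\Bigr)\right]
\le \tr\exp\bigl(c n\theta^2 \sigma^2 I\bigr)
= (d+1)\exp\bigl(c n\theta^2 \sigma^2\bigr).
\]
The analogue of Lemma~\ref{lm:key} for $\lambda_{\max}$ (obtained by replacing the matrix with its negative) then gives
\[
\prob\!\left(\Bigl\|\sum_{i=1}^n Z_i\Bigr\|_2 \ge t\right)
\le \inf_{\theta>0} e^{-\theta t}(d+1)\exp(c n\theta^2\sigma^2)
\le (d+1)\exp\!\left(-\frac{t^2}{4cn\sigma^2}\right),
\]
optimizing at $\theta = t/(2cn\sigma^2)$. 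Setting the right-hand side equal to $\delta$, using $d+1\le 2d$ for $d\ge 1$, and tracking the numerical constants produces the stated inequality with $K=3$.

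\textbf{Main obstacle.} The delicate step is the matrix MGF bound in Step 2: the standard matrix Hoeffding lemma (Lemma~\ref{lm:bound_log}) assumes an a.s.\ bounded operator norm, whereas here $\|\tilde Z_i\|_{\mathrm{op}}=\|Z_i\|_2$ is only sub-Gaussian. Replacing the boundedness by a tail condition requires a careful term-by-term comparison of even matrix moments against $\sigma^{2k}I$, and the exact numerical value of the constant $K$ ultimately comes from this calculation.
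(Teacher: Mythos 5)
Your argument is correct and coincides with the route the paper itself takes: the paper's one-line proof simply defers to Lemma~5 and Corollary~7 of Jin et al.\ (2019), whose argument is exactly this Hermitian dilation followed by a conditional matrix-Chernoff iteration of the same kind as in Lemma~\ref{prop:min_eigenvalue}. The only point left open on both sides is the numerical verification that the absolute constant can be taken to be $K=3$; the paper asserts this by citation rather than computation, and your Step~2 correctly isolates the moment comparison (valid for all $\theta\in\reals$ under the norm-subGaussian tail, since $\expec[\norm{Z_i}_2^k\mid\mathcal F_{i-1}]\le (C\sigma\sqrt{k})^k$) where that constant is determined.
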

\begin{proof}
	The proof follows from adapting the proof of Lemma 6 in \cite{LelucPortierSegers2021} working out their Lemma 5 and Corollary 7 from \cite{jin2019short}.
\end{proof}

\begin{lemma} \label{prop:min_eigenvalue} 
	Define $h_k=h(X_k)$, $Q_k = w_k h_k h_k^\top$, $Y_n = \sum_{k=1}^n Q_k$. Let the constant $L > 0$ be such that $\lambda_{\max}(Q_k) \le L$ with probability $1$. Then, for all $\zeta \in (0,1)$, we have
	\begin{align*}
		\prob\left(\lambda_{\min}(Y_n) \leq (1-\zeta) n \lambda_{\min}(G)\right) \leq m \left[ \frac{e^{-\zeta}}{(1-\zeta)^{(1-\zeta)}}\right]^{n \lambda_{\min}(G)/L}.
	\end{align*}
\end{lemma}

\begin{remark} \label{rk:square_brackets}
	The term in square brackets in Proposition~\ref{prop:min_eigenvalue} is bounded above by $e^{-\zeta^2/2}$ (\cite{LelucPortierSegers2021}, Lemma~2).
\end{remark}

\begin{proof}
	Let $\E_n$ denote the expectation with respect to $\mathcal{F}_{n-1} = \sigma(X_1,\dots,X_{n-1})$ and define $Z_n =\log(\E_n[e^{\nu Q_n}])$. 
	Using Lemma \ref{lm:concave} with the measurable w.r.t. $\mathcal{F}_{n-1}$ matrix $\Psi = \nu Y_{n-1}$, we have
	\begin{align*}
		\E \left[ \tr(e^{\nu Y_n}) \right] 
		= \E\left[ \E_n\left[ \tr(e^{\nu Y_{n-1} + \nu Q_n})\right] \right] 
		\leq \E\left[ \tr(e^{\nu Y_{n-1} + \log(\E_n[e^{\nu Q_n}])})\right] 
		= \E\left[ \tr(e^{\nu Y_{n-1} + Z_n})\right].
	\end{align*}
	Using again Lemma \ref{lm:concave} with the matrix $\Psi = \nu Y_{n-2} + Z_n$, the last term is upper bounded as
	\begin{align*}
		\E\left[ \tr(e^{\nu Y_{n-1} + Z_n})\right] 
		= \E\left[\E_{n-1}\left[ \tr(e^{\nu Y_{n-2} + \nu Q_{n-1} + Z_n})\right] \right] \leq \E\left[ \tr(e^{\nu Y_{n-2} + Z_{n-1} + Z_n})\right]
	\end{align*}
	Applying this inequality several times yields
	\begin{align*}
		\E[\tr(e^{\nu Y_n})] \leq \E[\tr( e^{\sum_{k=1}^n Z_{k}})].
	\end{align*}
	Applying Lemma \ref{lm:bound_log} gives $Z_k \preceq \eta(\nu) \E_k[Q_k]$, $\eta(\nu) = L^{-1}(e^{\nu L} - 1)$ for $k=1,\dots,n$. By Lemma \ref{lm:monotone_map}, we get
	\begin{align*}
		\E[\tr(e^{\nu Y_n})] \leq \E[\tr( e^{\sum_{k=1}^n Z_{k}})] \leq \E[\tr( e^{\sum_k \eta(\nu) \E_k[Q_{k}]})] = \tr( e^{n \eta(\nu) G}) .
	\end{align*}
	Now applying Lemma~\ref{lm:key} and taking into account the fact that $\eta(\nu) < 0$ for $\nu < 0$, we have
	\begin{align*}
		\prob\left( \lambda_{\min}(Y_n) \leq t \right) 
		&\leq \inf_{\nu < 0} e^{-\nu t} \E[\tr(e^{\nu Y_n})] \\
		&\leq \inf_{\nu < 0} e^{-\nu t} \tr( e^{n \eta(\nu) G})\\
		&\leq \inf_{\nu < 0} e^{-\nu t} \tr( e^{n \eta(\nu) \lambda_{\min}(G) I_m})\\
		&\leq \inf_{\nu < 0} e^{-\nu t} m e^{n \eta(\nu) \lambda_{\min}(G) }.
	\end{align*}
	We make the change of variables $t = (1-\zeta) n \lambda_{\min}(G) $ and minimize over $\nu < 0$ the following expression
	\begin{equation*}
		-n \nu (1-\zeta) \lambda_{\min}(G) + n \eta(\nu)  \lambda_{\min}(G).
	\end{equation*}
	The infimum is attained at $\nu=L^{-1}\log(1-\zeta)$ with $\eta(\nu)=-\zeta/L$ which gives the inequality of the Lemma.
\end{proof}

\section{Additional properties of AISCV estimator}
\label{app:properties}

\subsection{Orthogonal projections}
Some geometric considerations help to better understand certain properties of the AISCV estimate \eqref{eq:AISCV}. Let $\one_n = (1,\ldots,1)^\top \in \reals^n$ be a vector of ones and write
\begin{align*}
	g^{(n)} &= (g(X_1),\ldots,g(X_n))^\top, &
	H &= (h_j(X_i))_{\substack{i=1,\ldots,n\\j=1,\ldots,m}}, &
	\text{and}\ W &= \diag(w_1,\ldots,w_n).
\end{align*}
In matrix form, the weighted least-squares problem \eqref{eq:AISCV} is
\begin{equation}
\label{eq:AISCV:2}
	(\hat{\alpha}_n, \hat{\beta}_n) 
	\in \argmin_{(a,b) \in \reals \times \reals^m}  \| W^{1/2} (g^{(n)} - a \one_n - H b) \|^2_2.
\end{equation}
For any function $\varphi : \reals^d \to \reals^p$, let the operator $P_{n,w}$ return the weighted average of the sequence $\varphi(X_1), \ldots, \varphi(X_n)$ with the weights $w_1,\dots,w_n$, i.e.,
\begin{equation*}
	P_{n,w}(\varphi) = \frac{\sum_{i=1}^{n} w_i \varphi(X_i)}{\sum_{i=1}^{n} w_i}.
\end{equation*}
The empirically centred integrand and control variates are $g^{(n)}_W = g^{(n)} - \one_n P_{n,w}(g)$ and $H_W = H - \one_n P_{n,w}(h^\top)$. Put $W^{1/2} = \diag(w_1^{1/2}, \ldots, w_n^{1/2})$. The solution to \eqref{eq:AISCV:2} takes the form
\begin{equation}
\begin{cases}
& \hat{\alpha}_n = P_{n,w}(g -  \hat{\beta}_n^\top h) ,\\
& \hat{\beta}_n \in  \argmin_{b \in \reals^m} \| W^{1/2} (g^{(n)}_W - H_W b) \|^2_2,
\end{cases}       
\end{equation}
If the matrix $H_W^\top W H_W$ is invertible, the optimal vector $\hat{\beta}_n$ is unique and is given by
\begin{equation} \label{eq:wls_alpha_beta}
	\hat{\beta}_n = (H_W^\top W H_W)^{-1} H_W^\top W g^{(n)}_W.
\end{equation}

\subsection{Matrix representation}
Let us rewrite \eqref{eq:AISCV:2} in terms of two nested minimization problems:
\begin{equation}
\label{eq:aiscv:nested}
\hat{\alpha}_n \in \argmin_{a \in \reals} \left[ \min_{b \in \reals^m} \left\| W^{1/2} \left(g^{(n)} -  a \one_n - H b\right) \right\|^2_2 \right].
\end{equation}
Let $\Pi \in \reals^{n \times n}$ be the orthogonal projection matrix onto the column space of $H$, when $\reals^n$ is endowed with the scalar product $\langle x, y \rangle_W = x^\top W y$ for $x, y \in \reals^n$. For $v \in \reals^n$, we have
\[
\Pi v = H \hat{\beta}_n(v) 
\quad \text{where} \quad 
\hat{\beta}_n(v) \in \argmin_{b \in \reals^m} \left\| W^{1/2} (v - H b) \right\|_2^2.
\]
If $H$ has rank $m$, then the solution to the above minimization problem is unique and $\Pi = H (H^\top W H)^{-1} H^\top W$; otherwise, the matrix $\Pi$ is still uniquely defined, even though there are then multiple solutions $\hat{\beta}_n(v)$. Given $a \in \reals$, the minimum in \eqref{eq:aiscv:nested} over $b \in \reals^m$ is attained as soon as $H b = \Pi (g^{(n)} - a \one_{n})$. Therefore
\begin{equation} \label{eq:hat_alpha_I_Pi}
\hat{\alpha}_n 
\in \argmin_{a \in \reals} \left\| W^{1/2} (I_n -\Pi) \left(g^{(n)} - a \one_{n}\right) \right\|^2_2,
\end{equation}
where $I_n$ is the $n \times n$ identity matrix. Recall the vector $e_n$ in \eqref{eq:eni}. In our present notation, we have
\[
	e_n = (I_n - \Pi) \one_n.
\]

\begin{proposition}[Matrix representation]
\label{prop:another_representation}
	The minimizer $\hat{\alpha}_n$ in \eqref{eq:hat_alpha_I_Pi} is unique if and only if $e_n \ne 0$, in which case the normalized AISCV estimate is
	\begin{equation}
	\label{eq:aiscv:mat}
	I_n^{\mathrm{(aiscv)}}(g) =
	\hat{\alpha}_n 
	= \frac{\one_n^\top (I_n - \Pi)^\top W (I_n-\Pi) g^{(n)}}{\one_n^\top (I_n - \Pi)^\top W (I_n-\Pi) \one_n}
	= \frac{\one_n^\top (I_n - \Pi)^\top W g^{(n)}}{\one_n^\top (I_n - \Pi)^\top W \one_n}.
	\end{equation}
\end{proposition}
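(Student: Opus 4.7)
The plan is to treat the one-dimensional problem \eqref{eq:hat_alpha_I_Pi} as a scalar quadratic minimization in $a$ and simply read off the solution. First I would use the identity $(I_n - \Pi)\one_n = e_n$ to pull the scalar $a$ outside the projection,
\[
W^{1/2}(I_n - \Pi)(g^{(n)} - a\one_n) = W^{1/2}(I_n-\Pi)g^{(n)} - a\,W^{1/2}e_n,
\]
so that the objective $\phi(a) := \| W^{1/2}(I_n-\Pi)(g^{(n)} - a\one_n) \|_2^2$ expands as
\[
\phi(a) = \| W^{1/2}(I_n-\Pi)g^{(n)} \|_2^2 \;-\; 2a\, e_n^\top W (I_n-\Pi) g^{(n)} \;+\; a^2\, e_n^\top W e_n,
\]
a non-negative quadratic in $a$ with leading coefficient $e_n^\top W e_n = \|W^{1/2}e_n\|_2^2 \ge 0$.

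Next I would settle uniqueness. A non-negative quadratic admits a unique minimizer if and only if its leading coefficient is strictly positive, i.e.\ $W^{1/2}e_n \neq 0$. Since the importance weights $w_i = f(X_i)/q_{i-1}(X_i)$ are strictly positive on the support of $f$ (and particles on which $w_i = 0$ contribute nothing to any quantity in \eqref{eq:hat_alpha_I_Pi}), this is equivalent to $e_n \neq 0$. If $e_n = 0$, then $\phi$ is constant in $a$, every real number is a minimizer, and $\hat{\alpha}_n$ is not well-defined.

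Finally, assuming $e_n \ne 0$, I would solve $\phi'(a) = 0$ to get
\[
\hat{\alpha}_n = \frac{e_n^\top W (I_n-\Pi) g^{(n)}}{e_n^\top W e_n}.
\]
Substituting $e_n^\top = \one_n^\top(I_n-\Pi)^\top$ in both numerator and denominator produces the symmetric quadratic forms on the right-hand side of \eqref{eq:aiscv:mat}. The whole argument is a one-dimensional least-squares computation and poses no real obstacle; the only mildly delicate point is distinguishing $e_n \neq 0$ from $W^{1/2}e_n \neq 0$, which collapses once one invokes the strict positivity of the weights on the support of $f$.
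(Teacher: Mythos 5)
Your proposal is correct and follows essentially the same route as the paper's own proof: expand the objective as a quadratic in $a$, observe that the coefficient of $a^2$ equals $e_n^\top W e_n$ so that uniqueness holds if and only if $e_n \ne 0$, and solve the first-order condition to obtain \eqref{eq:aiscv:mat}. Your explicit remark that $W^{1/2} e_n \ne 0$ reduces to $e_n \ne 0$ via positivity of the weights is a small point the paper leaves implicit, but it does not change the argument.
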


\begin{proof}
The objective function on the right-hand side of \eqref{eq:aiscv:mat} is
\[
a^2 \one_n^\top (I_n - \Pi)^\top W (I_n - \Pi) \one_n - 2 a \one_n^\top (I_n - \Pi)^\top W (I_n - \Pi) g^{(n)} + \text{constant},
\]
where the unspecified constant does not depend on $a$. The coefficient of $a^2$ is equal to $e_n^\top W e_n$, which is positive if and only if $e_n \ne 0$. The latter is thus a necessary and sufficient for the minimizer $\hat{\alpha}_n$ to exist and be unique. In that case, the objective function is a convex quadratic function in $a$, whose minimizer is easily seen to be equal to the stated expression.
\end{proof}

\section{Proofs of the main results} \label{app:proofs}


\subsection{Proof of Proposition~\ref{prop:AISCV:quad}}

\begin{proof}
We start from Proposition~\ref{prop:another_representation}. Recall that $e_n = (I_n - \Pi) \one_n$. Since $\Pi^\top W = W \Pi$ and $\Pi^2 = \Pi$, we find $(I_n - \Pi)^\top W (I_n - \Pi) = (I_n - \Pi)^\top W$. We obtain
	\[
	\one_n^\top (I_n - \Pi)^\top W (I_n - \Pi) g^{(n)}
	= \one_n^\top (I_n - \Pi)^\top W g^{(n)}
	= e_n^\top W g^{(n)} = \sum_{i=1}^n w_i e_{n,i} g(X_i),
	\]
	and similarly $\one_n^\top (I_n - \Pi)^\top W (I_n - \Pi) g^{(n)} = \sum_{i=1}^n w_i e_{n,i}$.
\end{proof}

\subsection{Proof of Proposition~\ref{prop:exact}}
\begin{proof}
	If $g = \alpha + \beta^\top h$ for some $\alpha \in \reals$ and $\beta \in \reals^m$, then the minimum in \eqref{eq:AISCV} is clearly attained for $\hat{\alpha}_n = \alpha$ and $\hat{\beta}_n = \beta$.
\end{proof}

\subsection{Proof of Proposition~\ref{prop:invariance}}
\begin{proof}
	In \eqref{eq:AISCV}, if $b$ ranges over $\reals^m$, then $A^\top b$ ranges over $\reals^m$ too, since $A$ is invertible. It follows that the solutions $\hat{\alpha}_n$ in \eqref{eq:AISCV} do not change if we replace $h$ by $Ah$, since $b^\top A h = (A^\top b)^\top h$.
\end{proof}

\subsection{Proof of Theorem \ref{th:concentration_inequality}}

\begin{proof}
\item
	\paragraph{Step 1: Working out the probability of several bounds.} In Step 1, we gather several elementary bounds that will be useful to establish more advanced bounds in Step 2.

	\noindent \textbf{Bound 1.}
	To control $\left|  \sum_{i=1}^{n} w_i \varepsilon( X_i) \right|$, we apply Lemma \ref{lem:HffnsbG} with $Z_i $ equal to $ w_i \varepsilon(X_i)$. We have $\mathbb E [ w_i \varepsilon(X_i)|\mathcal F_{i-1}] = 0$ and by Assumption \ref{as:sub_gauss},
	\begin{align*}
		\mathbb P [ | w _ i \varepsilon (X_i)  | >t  | \mathcal F_{i-1} ] \leq 2  \exp( -  t  ^2 / ( 2 \tau^2)  ) 
	\end{align*}
	holds, and the sub-Gaussian variance factor is simply $ \tau^2$.
	Therefore, with probability at least $1 - \delta / 5$, we have
	\begin{align*}
		\left|  \sum_{i=1}^{n} w_i \varepsilon( X_i)   \right| \leq   K   \tau  \sqrt{n \log(10/\delta)}.
	\end{align*}

	\noindent \textbf{Bound 2.} For the term $\left\|     \sum_{i=1}^{n} w_i \hbar (X_i) \right\|_2$, we apply Lemma \ref{lem:HffnsbG} with $ Z_ i $ equal to $ w_i \hbar (X_i) $.  By Assumptions \ref{as:bounded_cv} and \ref{as:bound_of_density}, we have $\|  w_i \hbar (X_i) \| _2 \leq c \| \hbar(X_i) \|_2\leq c \sqrt B$, which implies that $ w_i \hbar (X_i)$ is sub-Gaussian (conditionally on $\mathcal F_{i-1}$) with variance factor $c ^2 B  $ \citep[Lemma 2.2]{boucheron2013concentration}. Hence \eqref{eq:HffnsbG} is satisfied with $\sigma^2 =c ^2 B $. Thus, with probability at least $1-\delta / 5$, the inequality
	\begin{align*}
		\left\|     \sum_{i=1}^{n} w_i \hbar (X_i) \right\|_2 \leq  K  c   \sqrt{n B \log(10m/\delta)}
	\end{align*}
	holds.
	
	\noindent \textbf{Bound 3.} Now we treat the term $\left\| \sum_{i=1}^{n} w_i  \hbar (X_i) \varepsilon (X_i)  \right\|_2$ applying again Lemma \ref{lem:HffnsbG} but this time with $Z_i$ equal to $ w_i \hbar (X_i) \varepsilon (X_i)  $. We have that $ \|  w_i \hbar (X_i) \varepsilon_ i \| _2 \leq  \sqrt B |w_i \varepsilon_ i| $.  By Assumption \ref{as:sub_gauss}, we have, for all $t > 0$,
	\begin{align*}
		\prob[  \norm{w_i \hbar(X_i) \varepsilon(X_i)}_2 > t | \mathcal F_i  ]
		&\le
		\prob[ \sqrt{B} \abs{w_i \varepsilon(X_i)} > t   | \mathcal F_i  ] \\
		&\le 
		2 \exp\left( - \frac{t^2}{2 B \tau^2} \right),
	\end{align*}
	and \eqref{eq:HffnsbG} holds with $\sigma^2 =  B \tau^2$. Lemma~\ref{lem:HffnsbG} then implies that, with probability at least $1-\delta/5$,
	\begin{equation*}
		\norm{ \sum_{i=1}^n  w_i  \hbar(X_i) \varepsilon(X_i) }_2
		\le K  \sqrt{n B \tau^2 \log(10 m/\delta)}. 
	\end{equation*}

	\noindent \textbf{Bound 4.}  By Lemma~\ref{prop:min_eigenvalue} and Remark~\ref{rk:square_brackets}, we have, with probability at least $1-\delta/5$,
	\begin{equation*}
		\lambda_{\min}\left(\sum_{i=1}^{n} w_i \hbar(X_i) \hbar^\top(X_i) \right) > (1-\zeta) n \lambda_{\min}(G) = (1-\zeta) n,
	\end{equation*}
	where, by Assumption~\ref{as:bounded_cv}, $G=\int f \hbar \hbar^\top \ \diff \lambda = I$, $\zeta$ satisfies the equation
	\begin{equation*}
		m \exp(\frac{-\zeta^2 n }{2 L}) = \delta/5.
	\end{equation*}
	with $L=c B$ according to Assumptions~\ref{as:bounded_cv} and \ref{as:bound_of_density}.
	Solving the last equation, we obtain
	\begin{equation*}
		\zeta = \sqrt{\frac{2 L \log(5 m/\delta)}{n}}. 
	\end{equation*}
	We choose $\zeta \le 1/2$ which gives the condition $n \geq 8 c B \log(5 m/\delta)$ and, with probability at least $1-\delta/5$,
	\begin{equation} \label{eq:lambda_min}
		\lambda_{\min}\left(\sum_{i=1}^{n} w_i \hbar(X_i) \hbar^\top(X_i) \right)> \left(1-\zeta\right) n \ge n/2.
	\end{equation}

	\noindent \textbf{Bound 5.}  Now we consider the term $\sum_{i=1}^{n} w_i$. 
	Since $- 1 \leq w_i - 1\leq c $, $|w_i-1| $ is bounded by $ c$, and $w_i - 1$ is sub-Gaussian with variance factor $ c^2$. This makes the inequality required in Lemma \ref{lem:HffnsbG} valid and henceforth
	\begin{align*}
		\left|  \sum_{i=1}^{n} (w_i  - 1)   \right| \leq   K  c     \sqrt{n \log(10/\delta)} 
	\end{align*}
	or
	\begin{align*}
		- K  c     \sqrt{n \log(10/\delta)} + n \leq \sum_{i=1}^{n} w_i   \leq   K  c     \sqrt{n \log(10/\delta)} + n. 
	\end{align*}
	
	We want to have $K  c     \sqrt{n \log(10/\delta)} \le n/2$. It holds if $\sqrt n \geq 2 K c \sqrt {\log(10/\delta)} $. Then we get that $ n/2 = n - n /2 \leq n -  K  c     \sqrt{n \log(10/\delta)}\leq  \sum_{i=1}^{n} w_i   $. Therefore, with probability at least $1-\delta / 5$, it holds that
	\begin{align*}
		\sum_{i=1}^{n} w_i \geq n / 2.
	\end{align*}	
	
\item
	\paragraph{Step 2: Extending the previous elementary bounds on appropriate quantities.} The work in this step consists in showing that under the five previous bounds, and therefore with probability at least $1-\delta$, we have that
	\begin{align}
		\label{extend1} &  \lambda_{\min}  \left(  \sum_{i=1}^{n} w_i \hbar_W (X_i) \hbar_W (X_i) ^\top \right) \geq  n / 4, \\
	\label{extend2}   &\left\| \sum_{i=1}^{n} w_i  \hbar_W (X_i) \varepsilon_W(X_i)  \right\|_2 \leq 2 K   \tau  \sqrt{n B \log(10m/\delta)} .
	\end{align}

	We start by proving  \eqref{extend1}. Recognizing a covariance, we get  
	\begin{align*}
		P_{n,w} \{   \hbar_W  \hbar_W ^\top \}  =  P_{n,w} ( \hbar \hbar^\top ) - P_{n,w} (\hbar ) P_{n,w} (\hbar ) ^\top ,
	\end{align*}
	and then, using Cauchy-Schwarz inequality, we have 
	\begin{align*}
		\lambda_{\min} (P_{n,w} \{   \hbar_W  \hbar_W ^\top \}  )  \geq    \lambda _{\min} (P_{n,w} ( \hbar \hbar^\top ) )  -  \| P_{n,w} (\hbar ) \|_2^2 
	\end{align*}
	or, equivalently,
	\begin{align*}
		\lambda_{\min}  \left(  \sum_{i=1}^{n} w_i \hbar_W (X_i) \hbar_W (X_i) ^\top \right)  \geq   \lambda_{\min}  \left(  \sum_{i=1}^{n} w_i \hbar (X_i) \hbar (X_i) ^\top \right) -  \Big\|  \sum_{i=1}^{n} w_i \hbar (X_i)   \Big\|_2^2 \Big/ \sum_{i=1}^{n} w_i ,
	\end{align*}
	From Bound 2 and Bound 5, 
	$$\Big\|  \sum_{i=1}^{n} w_i \hbar (X_i)   \Big\|_2^2 \Big/ \sum_{i=1}^{n} w_i  \leq  \frac{K^2  c^2   B   n  \log(10m/\delta)}{n / 2}   =  2 K^2  c^2   B     \log(10m/\delta)         $$
	Using Bound 4 and the previous inequality, it follows that
	$$
	\lambda_{\min}  \left(  \sum_{i=1}^{n} w_i \hbar_W (X_i) \hbar_W (X_i) ^\top \right)  \geq n /2  -   2 K^2  c^2   B     \log(10m/\delta).   
	$$
	If $n \ge 8 K^2 c^2   B     \log(10m/\delta) $, 
	$$
	\lambda_{\min}  \left(  \sum_{i=1}^{n} w_i \hbar_W (X_i) \hbar_W (X_i) ^\top \right) \geq n/4.
	$$
	We have just obtained  \eqref{extend1}.
	
	Let us now establish \eqref{extend2}. Recognizing a covariance, we find  
	\begin{align*}
		P_{n,w} \{   \hbar_W  \varepsilon_W \}  =  P_{n,w} ( \hbar \varepsilon) - P_{n,w} (\hbar ) P_{n,w} (  \varepsilon) ,
	\end{align*}
	and it follows that
	\begin{align*}
		\left\|  P_{n,w} \{   \hbar_W  \varepsilon_W \}  \right\|_2 \leq \|  P_{n,w} ( \hbar \varepsilon) \|_2 + \| P_{n,w} (\hbar ) \|_2 | P_{n,w} (  \varepsilon) |,
	\end{align*}
	or, equivalently,
	\begin{align*}
		\left\| \sum_{i=1}^{n} w_i  \hbar_W (X_i) \varepsilon_W(X_i)  \right\|_2 
		\leq \Big\| \sum_{i=1}^{n} w_i  \hbar (X_i) \varepsilon (X_i)   \Big\|_2 +  \| P_{n,w} (\hbar ) \|_2    \Big|  \sum_{i=1}^{n} w_i  \varepsilon (X_i)  \Big|  .
	\end{align*}
	Now using Bound 2 and 5, we find
	\begin{align}\label{eq:useful}
		\| P_{n,w} (\hbar ) \|_2   \leq 2 K c  \sqrt{ \frac{B \log(10m/\delta)}{n}} ,
	\end{align}
	which combined with Bound 1 leads to

	\begin{align*}
		\| P_{n,w} (\hbar ) \|_2    \Big|  \sum_{i=1}^{n} w_i  \varepsilon (X_i)  \Big|
		& \leq 2 K^2 c  \tau     \sqrt{ B \log(10m/\delta)  \log(10/\delta)}\\
		& \leq 2 K^2  c  \tau   \sqrt B    \log(10m/\delta) . 
	\end{align*}
	The previous inequality and Bound 3 gives
	\begin{align*}
		\left\| \sum_{i=1}^{n} w_i  \hbar_W (X_i) \varepsilon_W(X_i)  \right\|_2 
&\leq 		 K    \tau  \sqrt{n B\log(10m/\delta)}  +  2 K^2 c  \tau   \sqrt B    \log(10m / \delta)  \\
		& =  K    \tau  \sqrt{n B \log(10m/\delta)} \left( 1 + 2 K c \sqrt {\frac{\log(10m/\delta) }{n} }   \right) \\
		& \leq 2 K   \tau  \sqrt{n B \log(10m/\delta)} 
	\end{align*}
	if $n \geq 4 K^2 c^2 \log(10m/\delta) $. 
	
	The condition $ n\geq 8 K^2  c^2   B     \log(10m/\delta) $ (used in establishing \eqref{extend1}) implies $n \geq 4 K^2 c^2  \log(10m/\delta) $ (used in proving \eqref{extend2}), $n \geq 8 c B \log(5 m/\delta)$ (used in Bound 4) and $n\geq 4K^2 c^2 \log(10/\delta) $ (used in Bound 5) since $ m\geq 1$, $B\geq m$ and $c\geq 1$. Therefore, the constant $C_1$ from the statement of the theorem equals $8 K^2$.

\item
	\paragraph{Step 3. End of the proof.}
	The quantity to be bounded can be written as a sum of two terms as follows
	\begin{align*}
		I^{\mathrm{(aiscv)}}_{n}(g, \hat{\beta}_n)
		- \int_{\reals^d} g(x) f(x) \, \diff x = P_{n,w} \{ \varepsilon \}    +  P_{n,w} \{ h   \}^\top    (\beta^*-\hat{\beta}_n) .
	\end{align*}
	Using Bounds 1 and 5,  the first term in the right-hand side satisfies

	\begin{align*}
		| P_{n,w} \{ \varepsilon \}   | \leq      2 K     \tau  \sqrt{ \frac{\log(10/\delta)}{n} }  .
	\end{align*}

	This corresponds to the first term in the bound of the theorem with the constant $C_2$ equals $2K$.
	Hence, it remains to show that
	\begin{align*}
		| P_{n,w} \{ h   \}^\top    (\beta^*-\hat{\beta}_n) | \leq C_3 c B \tau \log(10m/\delta) / n.
	\end{align*}
	Introducing $G^{- 1/2} G^{1/2}$, we obtain
	\begin{align*}
		P_{n,w} \{ h   \}^\top (\beta^*-\hat{\beta}_n) 
		=    P_{n,w} \{ \hbar   \}^\top   G^{1/2} (\beta^*-\hat{\beta}_n)  .
	\end{align*}
	Then, using the identity 
	\begin{align*}
		(\hat{\beta}_n -\beta^* ) 
		&=    (H_W^\top W H_W)^{-1} H_W^\top W \varepsilon^{(n)}_W  
	\end{align*}
	and Cauchy-Schwarz inequality yields
	\begin{align*}
		\left | P_{n,w} \{ h   \}^\top  (\beta^*-\hat{\beta}_n) \right|
		&\leq 
		\left\|    P_{n,w} \{ \hbar  \} \right\|_2 \|    G^{1/2} (\beta^*-\hat{\beta}_n)  \|_2\\
		&\leq \left\|    P_{n,w} \{ \hbar   \} \right\|_2 \left\|  G^{1/2}  (H_W^\top W H_W)^{-1} H_W^\top W \varepsilon^{(n)}_W \right\|_2 \\
		&\leq \left\|     P_{n,w} \{ \hbar   \} \right\|_2 \left\|  G^{1/2}  (H_W^\top W H_W)^{-1}  G^{1/2}  \right\|_2 \left\|G^{-1/2}   H_W^\top W \varepsilon^{(n)}_W \right\|_2\\
		& = \left\|     P_{n,w} \{ \hbar   \} \right\|_2 \left\|  G^{1/2}  (H_W^\top W H_W)^{-1}  G^{1/2}  \right\|_2 \left\|G^{-1/2}   H_W^\top W \varepsilon^{(n)}_W \right\|_2.
	\end{align*}
	By \eqref{extend1}, we have
	\begin{align*}
		\left\|  G^{1/2}  (H_W^\top W H_W)^{-1}  G^{1/2}  \right\|_2 &=  \left\| 
		\left( \sum_{i=1}^{n} w_i \hbar_W (X_i) \hbar_W (X_i) ^\top
		\right) ^{-1}\right\|_2 \\
		& =\Big[\lambda _{\min} \left( \sum_{i=1}^{n} w_i \hbar_W (X_i) \hbar_W (X_i) ^\top \right)\Big]^{-1}
		\leq 4/n.
	\end{align*}
	From  \eqref{extend2} and \eqref{eq:useful}, it follows that

	\begin{align*}
		\left | P_{n,w} \{ h   \}^\top  (\beta^*-\hat{\beta}_n) \right|
		&\leq  2 K   \sqrt{ \frac{B \log(10m/\delta)}{n}}   \frac{8 K c \tau  \sqrt{n B \log(10m/\delta)} }{n} \\
		& = 16 K^2 c B \tau \frac{ \log(10m/\delta)}{n}.
	\end{align*}
	
	Therefore, the constant $C_3$ from the statement of the theorem equals $16 K^2$.
\end{proof}

\section{Additional numerical results} \label{app:num_supp}

\textbf{Parameters.} In all simulations, the sampling policy is taken within the family of multivariate Student $t$ distributions of degree $\nu$ denoted by $\{q_{\mu,\Sigma_0}\,:\, \mu \in \reals^d \}$ with $\Sigma_0 =\sigma_0 I_d(\nu-2)  /\nu$ and $\nu >2, \sigma_0 >0$.
Similarly to \cite{portier2018asymptotic}, the mean $\mu_t$ is updated at each stage $t=1,\ldots, T$ by the generalized method of moments (GMM), leading to $$\mu_t	= \frac{\sum_{s=1}^t \sum_{i=1}^{n_s} w_{s,i }X_{s,i}}{\sum_{s=1}^t \sum_{i=1}^{n_s} w_{s,i}}.$$
 The allocation policy is fixed to $n_t = \num{1000}$ and the number of stages is $T \in \{5;10;20;30;50\}$. The different Monte Carlo estimates are compared by their mean squared error (MSE) obtained over $100$ independent replications. 	In other words, for each method that returns $\hat I(g)$, the mean square error is computed as the average of $\|\hat I(g) - I(g) \|_2^2$ computed over $100$ replicates of $\hat I(g)$. When the integrand is real-valued, this quantity is scaled as $([\hat I(g) - I(g)]/I(g))^2$. 
 
 The experiments were performed on a laptop Intel Core i7-10510U CPU \@ 1.80GHz $\times$ 8.
	
\subsection{Synthetic examples: integration on $[0,1]^d$}

We seek to integrate functions $g$ with respect to the uniform density $f(x) = 1$ for $x \in [0, 1]^d$ in dimensions $d \in \{4;8\}$. We rely on Legendre polynomials for the control variates. Consider the integrands $g_1(x) = 1 + \sin(\pi(2 d^{-1} \sum_{i=1}^d x_{i} - 1))$, $g_2(x) = \prod_{i=1}^d (2/\pi)^{1/2} x_i^{-1} \mathrm{e}^{-\log(x_i)^2/2}$ and  $g_3(x) = \prod_{i=1}^d \log(2) 2^{1-x_i}$, all of which integrate to $1$ on $[0, 1]^d$. 
None of the integrands is a linear combination of the control variates. The policy parameters are $\mu_0 = (0.5,\ldots, 0.5)\in \reals^d$, $\nu = 8$, and $\sigma_0 = 0.1$. The control variates are built out of tensor products of Legendre polynomials where the degree $\ell_j$ equals $0$ for all but two coordinates, leading to a total number of $m = kd + k^2 d(d - 1)/2$ control variates. The maximum degree in each variable is $k=6$, yielding $m=240$ and $m=1056$ control variates in dimensions $d=4$ and $d=8$ respectively. Figure~\ref{fig:synthetic_1} presents the boxplots of the AIS and AISCV estimates.
 
  Figure \ref{fig:app_synthetic_1} presents the boxplots of the different estimates and Table \ref{tab:synthetic_1} gathers the numerical values of the mean squared errors. As a natural competitor to our AISCV estimator, we also implemented the weighted version of standard AIS called \textit{w-AIS} introduced in \cite{portier2018asymptotic}. Interestingly, such a method presents similar or even worse performance than the standard AIS estimate for dimension $d=4$ but better results for dimension $d=8$. This good behavior is illustrated in Figure \ref{fig:sin_d8_wais}
 and Figure \ref{fig:exp_d8_wais}. Accordingly, the values of the MSE for w-AIS are smaller than the one of AIS in dimension $d=8$ but still greater than the ones of AISCV.
 
	\begin{figure}[h]
		\centering
		\begin{subfigure}[b]{0.245\textwidth}
			\centering
			\includegraphics[width=\textwidth]{sin_d4.pdf}
			\caption{$g_1 (d=4)$}
			\label{fig:sin_d4}
		\end{subfigure}
		\hfill
		\begin{subfigure}[b]{0.245\textwidth}
			\centering
			\includegraphics[width=\textwidth]{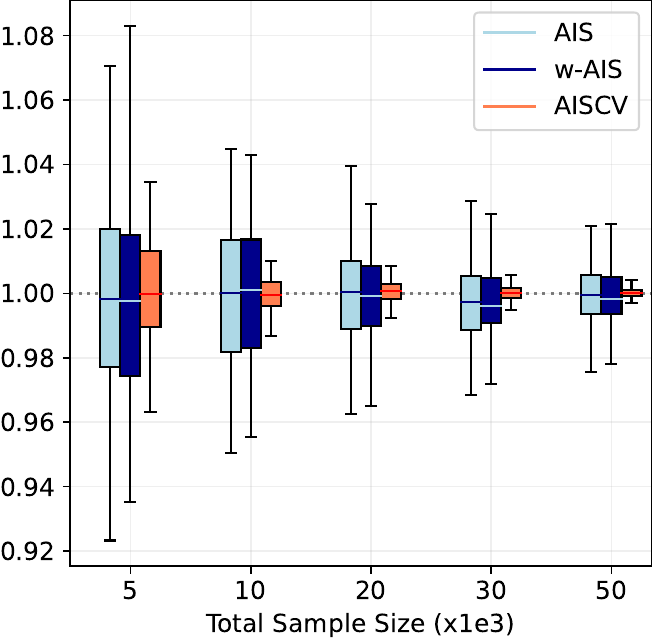}
			\caption{$g_1 (d=8)$}
			\label{fig:sin_d8_wais}
		\end{subfigure}
		\hfill
		\begin{subfigure}[b]{0.245\textwidth}
			\centering
			\includegraphics[width=\textwidth]{lognorm_d4.pdf}
			\caption{$g_2 (d=4)$}
			\label{fig:lognorm_d4}
		\end{subfigure}
		\hfill
		\begin{subfigure}[b]{0.245\textwidth}
			\centering
			\includegraphics[width=\textwidth]{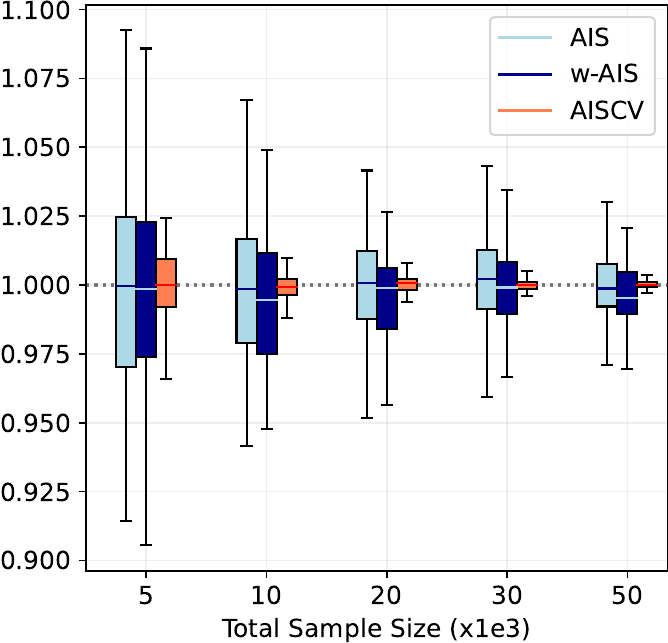}
			\caption{$g_3 (d=8)$}
			\label{fig:exp_d8_wais}
		\end{subfigure}
		\caption{Integration on $[0, 1]^d$: boxplots of estimates $I^{\mathrm{(ais)}}_{n}(g)$ and $I^{\mathrm{(aiscv)}}_{n}(g)$ with integrands $g_1, g_2, g_3$ in dimensions $d \in \{4;8\}$ obtained over $100$ replications. The true integral equals $1$.}
		\label{fig:app_synthetic_1}        
	\end{figure}
	
	\begin{table}[h!]
		\centering
		\begin{tabular}{ccccccc} 
			\hline
			\multicolumn{2}{c}{Sample Size $n$} & \multirow{2}{2.7em}{$5,000$} & \multirow{2}{2.7em}{$10,000$} & \multirow{2}{2.7em}{$20,000$} & \multirow{2}{2.7em}{$30,000$} & \multirow{2}{2.7em}{$50,000$} \\ 
			Integrand & Method & & & & & \\
			\hline
			\multirow{3}{3.1em}{\centering $g_1$ \\ $(d=4)$} &  AIS & $2.9e$-$4$ & $1.5e$-$4$ & $7.8e$-$5$ & $5.8e$-$5$ & $3.7e$-$5$ \\
			& wAIS & $3.0e$-$4$ & $1.6e$-$4$ & $8.3e$-$5$ & $6.5e$-$5$ & $4.1e$-$5$ \\
			& AISCV & \bm{$9.7e$}-\bm{$5$} & \bm{$1.9e$}-\bm{$5$} & \bm{$1.0e$}-\bm{$5$} & \bm{$7.5e$}-\bm{$6$} & \bm{$4.3e$}-\bm{$6$} \\ 
			\hline
			\multirow{3}{3.1em}{\centering $g_1$ \\ $(d=8)$} &  AIS & $8.7e$-$4$ & $4.6e$-$4$ & $2.3e$-$4$ & $1.9e$-$4$ & $1.0e$-$4$\\
			& wAIS & $9.2e$-$4$ & $4.6e$-$4$ & $2.2e$-$4$ & $1.6e$-$4$ & $9.0e$-$5$\\
			& AISCV & \bm{$3.2e$}-\bm{$4$} & \bm{$3.2e$}-\bm{$5$} & \bm{$1.1e$}-\bm{$5$} & \bm{$6.0e$}-\bm{$6$} & \bm{$2.5e$}-\bm{$6$}\\ 
			\hline
			\multirow{3}{3.1em}{\centering $g_2$ \\ $(d=4)$} &  AIS & $3.4e$-$4$ & $1.3e$-$4$ & $7.6e$-$5$ & $5.9e$-$5$ & $3.1e$-$5$\\
			& wAIS & $3.7e$-$4$ & $1.6e$-$4$ & $1.2e$-$4$ & $1.1e$-$4$ & $7.9e$-$5$\\
			& AISCV &  \bm{$3.1e$}-\bm{$5$} & \bm{$1.0e$}-\bm{$5$} & \bm{$4.9e$}-\bm{$6$} &\bm{$2.6e$}-\bm{$6$} & \bm{$1.5e$}-\bm{$6$}\\ 
			\hline
			\multirow{3}{3.1em}{\centering $g_3$ \\ $(d=8)$} &  AIS & $1.6e$-$3$ & $7.8e$-$4$ & $4.0e$-$4$ & $3.3e$-$4$ & $1.9e$-$4$ \\
			&  wAIS & $1.5e$-$3$ & $7.3e$-$4$ & $3.6e$-$4$ & $2.7e$-$4$ & $1.5e$-$4$ \\
			& AISCV & \bm{$1.7e$}-\bm{$4$} & \bm{$2.1e$}-\bm{$5$} & \bm{$7.8e$}-\bm{$6$} & \bm{$4.3e$}-\bm{$6$} & \bm{$1.8e$}-\bm{$6$}\\ 
			\hline
		\end{tabular}
		\caption{Mean Square Errors for $g_1,g_2,g_3$ with AIS, wAIS \citep{portier2018asymptotic} and AISCV in dimensions $d \in \{4;8\}$ obtained over $100$ replications.}
		\label{tab:synthetic_1}
	\end{table}
	
	\subsection{Synthetic examples: gaussian mixtures}
	
\textbf{General target $f$ and Stein method.} In this setting we only assume acces to the evaluations of the target density $f$. 
We consider the classical example introduced in \cite{cappe2008adaptive} where $f$ is a mixture of two gaussian distributions. The control variates are built using Stein's method with polynomial maps of degree $Q \in \{2;3\}$ leading to a number of control variates $m \in \{14;34\}$ in dimension $d=4$ and $m \in \{44;164\}$ in dimension $d=8$ respectively. 
	
	\textit{Isotropic case.} Let $f_{\Sigma}(x) = 0.5 \Phi_{\Sigma}(x-\mu) + 0.5 \Phi_{\Sigma}(x+\mu)$ where $\mu = (1,\ldots,1)^\top/2\sqrt{d}, \Sigma = I_d/d$ and $\Phi_{\Sigma}$ is the multivariate normal density function with zero mean and covariance matrix $\Sigma$. Note that the Euclidean distance between the two mixture centers is independent of the dimension as it equals $1$. The initial density $q_0$ is the multivariate student distribution with mean $(1,-1,0,\ldots,0)/ \sqrt{d}$ and variance $(5/d)I_d$. The initial mean value differs from the null vector to prevent the naive algorithm using the initial density from having good results (due to the symmetry). 
	
	\textit{Anisotropic case.} In this case, the mixture is unbalanced and each gaussian is anisotropic. The target density is $f_{V}(x) = 0.75 \Phi_{V}(x-\mu) + 0.25 \Phi_{V}(x+\mu)$ where $\mu = (1,\ldots,1)^\top/2\sqrt{d}$ and $V = Diag(10,1,\ldots,1)/d$. The initial density $q_0$ is the same as for the isotropic case.
	
Figure \ref{fig:app_synthetic_2} presents the boxplots of the mean square error $\| \hat I(g) - I(g)\|_2^2$ and Table \ref{tab:synthetic_2} gathers the associated numerical values.
	
		\begin{figure}[h]
		\centering
		\begin{subfigure}[b]{0.245\textwidth}
			\centering
			\includegraphics[width=\textwidth]{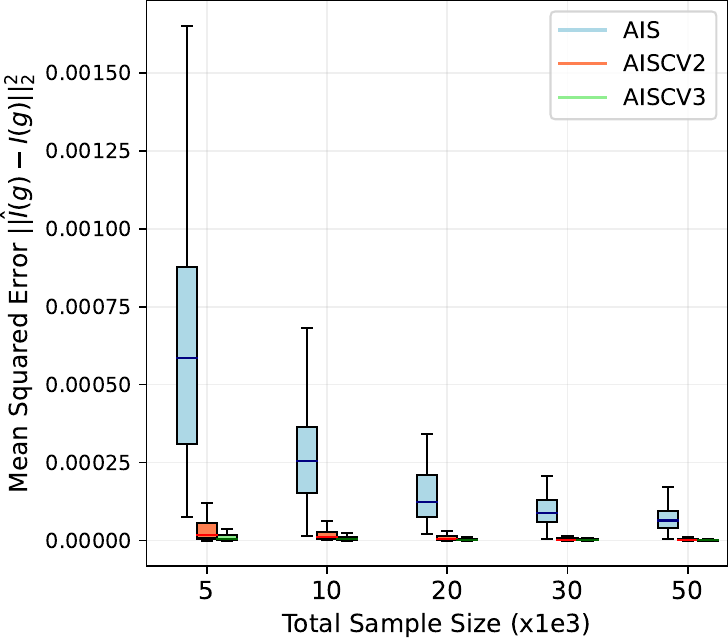}
			\caption{$f_{\Sigma} (d=4)$}
			\label{fig:iso_d4}
		\end{subfigure}
		\hfill
		\begin{subfigure}[b]{0.245\textwidth}
			\centering
			\includegraphics[width=\textwidth]{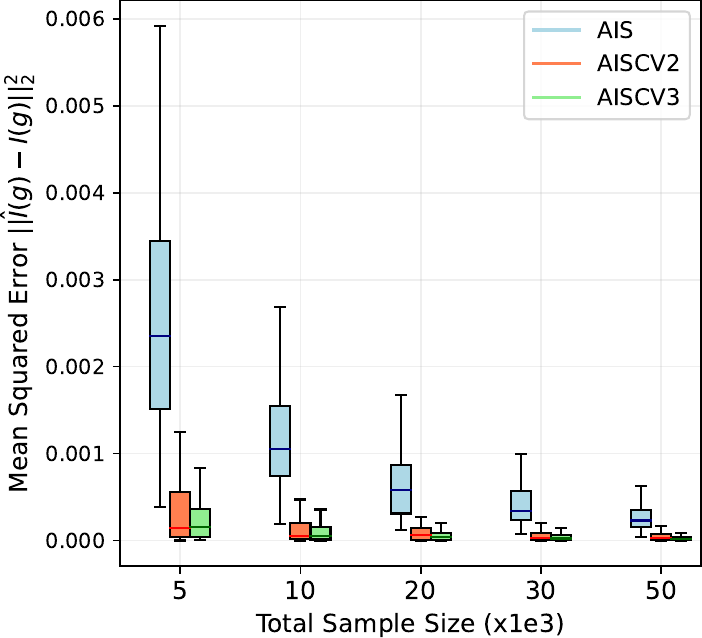}
			\caption{$f_{\Sigma} (d=8)$}
			\label{fig:iso_d8}
		\end{subfigure}
		\hfill
		\begin{subfigure}[b]{0.245\textwidth}
			\centering
			\includegraphics[width=\textwidth]{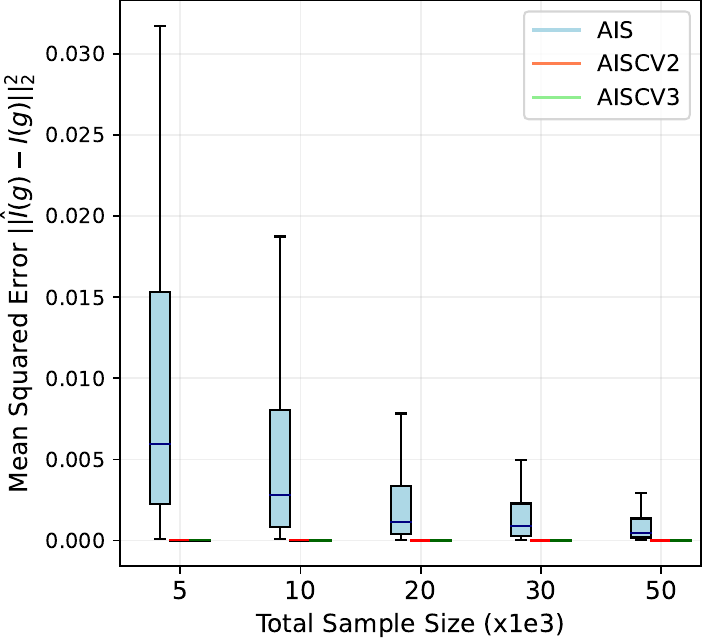}
			\caption{$f_V (d=4)$}
			\label{fig:ani_d4}
		\end{subfigure}
		\hfill
		\begin{subfigure}[b]{0.245\textwidth}
			\centering
			\includegraphics[width=\textwidth]{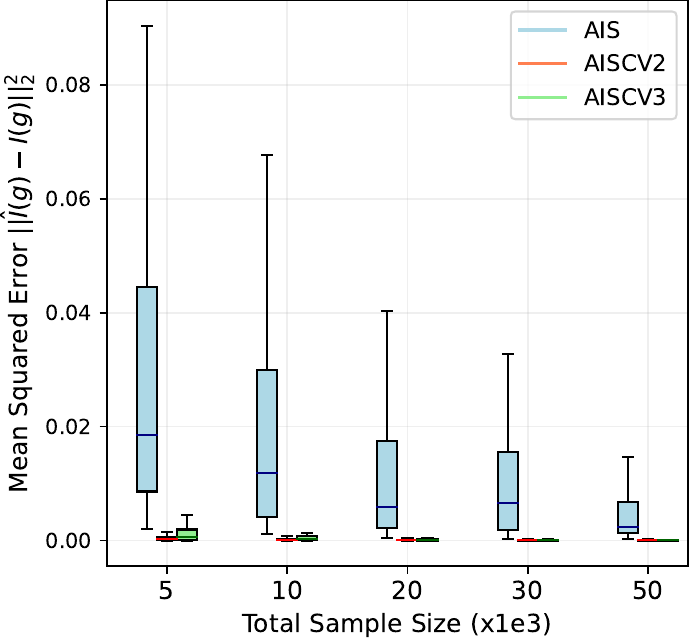}
			\caption{$f_{V} (d=8)$}
			\label{fig:ani_d8}
		\end{subfigure}
		\caption{Boxplots for $\| \hat I(g) - I(g)\|_2^2$ for $g(x)=x$ with target isotropic $f_{\Sigma}$ and anisotropic $f_V$ in dimensions $d \in \{4;8\}$ obtained over $100$ replications.}
		\label{fig:app_synthetic_2}        
	\end{figure}
	
	\begin{table}[h!]
		\centering
		\begin{tabular}{ccccccc} 
			\hline
			\multicolumn{2}{c}{Sample Size $n$} & \multirow{2}{2.7em}{$5,000$} & \multirow{2}{2.7em}{$10,000$} & \multirow{2}{2.7em}{$20,000$} & \multirow{2}{2.7em}{$30,000$} & \multirow{2}{2.7em}{$50,000$} \\ 
			Target & Method & & & & & \\
			\hline
			\multirow{4}{3.1em}{\centering $f_\Sigma$ \\ $(d=4)$} &  AIS & $6.9e$-$4$ & $2.9e$-$4$ & $1.5e$-$4$ & $1.1e$-$4$ & $7.2e$-$5$ \\
			&  wAIS & $6.8e$-$4$ & $2.9e$-$4$ & $1.5e$-$4$ & $1.1e$-$4$ & $7.3e$-$5$ \\
			& AISCV-2 & $4.1e$-$5$ & $2.2e$-$5$ & $9.1e$-$6$ & $5.6e$-$6$ & $3.7e$-$6$ \\ 
			& AISCV-3 & \bm{$1.5e$}-\bm{$5$} & \bm{$8.4e$}-\bm{$6$} & \bm{$3.7e$}-\bm{$6$} & \bm{$2.3e$}-\bm{$6$} & \bm{$1.3e$}-\bm{$6$} \\ 
			\hline
			\multirow{4}{3.1em}{\centering $f_\Sigma$ \\ $(d=8)$} &  AIS & $2.7e$-$3$ & $1.2e$-$3$ & $6.6e$-$4$ & $4.1e$-$4$ & $2.7e$-$4$\\
			&  wAIS & $2.7e$-$3$ & $1.2e$-$3$ & $6.9e$-$4$ & $4.3e$-$4$ & $2.8e$-$4$\\
			& AISCV-2 & $3.7e$-$4$ & $1.7e$-$4$ & $1.0e$-$4$ & $6.8e$-$5$ & $4.7e$-$5$\\ 
			& AISCV-3 & \bm{$2.8e$}-\bm{$4$} & \bm{$1.2e$}-\bm{$4$} & \bm{$6.3e$}-\bm{$5$} & \bm{$4.2e$}-\bm{$5$} & \bm{$2.6e$}-\bm{$5$} \\ 
			\hline
			\multirow{4}{3.1em}{\centering $f_V$ \\ $(d=4)$} &  AIS & $1.1e$-$2$ & $5.5e$-$3$ & $2.2e$-$3$ & $1.6e$-$3$ & $9.5e$-$4$\\
			&  wAIS & $1.1e$-$2$ & $5.3e$-$3$ & $2.0e$-$3$ & $1.3e$-$3$ & $8.0e$-$4$\\
			& AISCV-2 &  $1.3e$-$5$ & $7.2e$-$6$ & $2.9e$-$6$ & $1.9e$-$6$ & $1.2e$-$6$\\ 
			& AISCV-3 & \bm{$1.1e$}-\bm{$5$} & \bm{$6.6e$}-\bm{$6$} & \bm{$2.2e$}-\bm{$6$} & \bm{$1.5e$}-\bm{$6$} & \bm{$9.6e$}-\bm{$7$} \\ 
			\hline
			\multirow{4}{3.1em}{\centering $f_V$ \\ $(d=8)$} &  AIS & $4.5e$-$2$ & $3.2e$-$2$ & $2.2e$-$2$ & $1.5e$-$2$ & $6.8e$-$3$ \\
			&  wAIS & $2.6e$-$2$ & $1.3e$-$2$ & $7.8e$-$3$ & $5.9e$-$3$ & $3.8e$-$3$ \\
			& AISCV-2 & \bm{$4.6e$}-\bm{$4$} & \bm{$2.8e$}-\bm{$4$} & \bm{$1.3e$}-\bm{$4$} & \bm{$9.7e$}-\bm{$5$} & $6.0e$-$5$ \\ 
			& AISCV-3 & $1.4e$-$3$ & $4.8e$-$4$ & $1.5e$-$4$ & $1.1e$-$4$ & \bm{$5.7e$}-\bm{$5$} \\ 
			\hline
		\end{tabular}
		\caption{Mean Square Errors $\| \hat I(g) - I(g)\|_2^2$ for $g(x)=x$ with target isotropic $f_{\Sigma}$ and anisotropic $f_V$ in dimensions $d \in \{4;8\}$ obtained over $100$ replications.}
		\label{tab:synthetic_2}
	\end{table}
	
	\subsection{Real-world data: Bayesian linear regression}

We place ourselves in the framework of Bayesian linear regression with observations  $X \in \rset^{N \times d}$ and labels $y \in \rset^N$. The posterior distribution $p(\theta|\mathcal{D})$ depends on a gaussian prior $\pi \sim \mathcal{N}(\mu_a,\Sigma_a)$ and a likelihood function $\ell(\mathcal{D}|\theta) \propto (\sigma^2)^{-N/2} \exp(-(y-X \theta)^\top (y-X \theta)/(2 \sigma^2))$ where the noise level is fixed and taken sufficiently large $\sigma = 50$ to account general priors. Observe that the posterior distribution is also gaussian $\mathcal{N}(\mu_b,\Sigma_b)$ with $\mu_b = \Sigma_b (\sigma^{-2} X^\top y + \Sigma_a^{-1} \mu_a)$ and $\Sigma_b = (\sigma^{-2} X^\top X + \Sigma_a^{-1})^{-1}$. The integrand is $g(\theta) = \|\theta\|_2^2$ and the control variates are built using Stein method described in Section \ref{subsec:cv_practice} with degree $Q \in \{1;2\}$, leading to the AISCV1 and AISCV2 estimators respectively. Observe that when $Q=2$, the integrand belongs to the linear span of the control variates so the integration should be exact in light of Proposition \ref{prop:exact}.

\begin{figure}[h!]
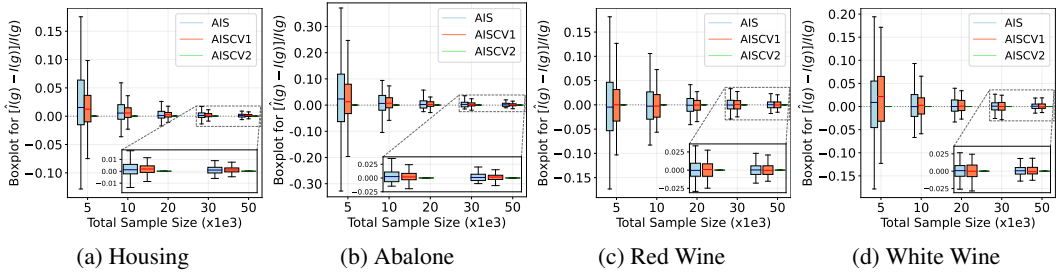

		\centering
		\begin{subfigure}[b]{0.245\textwidth}
			\centering
			\includegraphics[width=\textwidth]{housing_norm.pdf}
			\caption{Housing}
			\label{fig:housing}
		\end{subfigure}
		\hfill
		\begin{subfigure}[b]{0.245\textwidth}
			\centering
			\includegraphics[width=\textwidth]{abalone_norm.pdf}
			\caption{Abalone}
			\label{fig:abalone}
		\end{subfigure}
		\hfill
		\begin{subfigure}[b]{0.245\textwidth}
			\centering
			\includegraphics[width=\textwidth]{redwine_norm.pdf}
			\caption{Red Wine}
			\label{fig:mpg}
		\end{subfigure}
		\hfill
		\begin{subfigure}[b]{0.245\textwidth}
			\centering
			\includegraphics[width=\textwidth]{whitewine_norm.pdf}
			\caption{White Wine}
			\label{fig:trya}
		\end{subfigure}
		\caption{Boxplots of $(\hat I(g) - I(g))/I(g)$,  $g(\theta)=\|\theta\|_2^2$, obtained over $100$ replications.}
		\label{fig:app_BLR_real}        
	\end{figure}	
	
\textbf{Datasets and parameters.} Some classical datasets from UCI Machine Learning repository \cite{dua2019uci} are considered : \textit{housing} ($N=506;d=13;m \in \{12;104\}$); \textit{abalone} ($N=4,177;d=8;m \in \{7;44\}$); \textit{red wine} ($N=1,599;d=11;m \in \{10;77\}$) and \textit{white wine} ($N=4,898;d=11;m \in \{10;77\}$). The initial density is the multivariate student distribution with $\nu = 10$ degrees of freedom, zero mean and covariance matrix $\Sigma_b$.

\textbf{Results.} Figure \ref{fig:app_BLR_real} presents the boxplots of the error $(\hat I(g) - I(g))/I(g)$ and Table \ref{tab:BLR_real} gathers the associated numerical values. Observe the benefits of using control variates even with polynomials of degree $Q=1$. Observe that when $Q=2$, the error of the AISCV2 estimator is almost equal to zero which is in line with Proposition \ref{prop:exact}. Accordingly when looking at the MSE, the AISCV1 error is smaller than the AIS one by a factor ranging between $2$ and $10$ and the MSE of AISCV2 is of order $10^{-9}$.

	\begin{table}[h!]
		\centering
		\begin{tabular}{ccccccc} 
			\hline
			\multicolumn{2}{c}{Sample Size $n$} & \multirow{2}{2.7em}{$5,000$} & \multirow{2}{2.7em}{$10,000$} & \multirow{2}{2.7em}{$20,000$} & \multirow{2}{2.7em}{$30,000$} & \multirow{2}{2.7em}{$50,000$} \\ 
			Dataset & Method & & & & & \\
			\hline
			\multirow{3}{3.7em}{\centering Housing} &  AIS & $2.2e$-$2$ & $4.4e$-$3$ & $3.1e$-$4$ & $2.7e$-$4$ & $2.5e$-$4$ \\
			& AISCV1 & $2.9e$-$3$ & $7.0e$-$4$ & $1.7e$-$4$ & $1.6e$-$4$ & $5.2e$-$5$ \\
			& AISCV2 & $5.6e$-$9$ & $5.6e$-$9$ & $5.6e$-$9$ & $5.6e$-$9$ & $5.6e$-$9$ \\
			\hline
			\multirow{3}{3.7em}{\centering Abalone} &  AIS & $6.2e$-$2$ & $2.6e$-$2$ & $1.1e$-$2$ & $6.5e$-$3$ & $3.1e$-$3$\\
			& AISCV1 & $6.3e$-$3$ & $1.2e$-$3$ & $4.7e$-$4$ & $3.1e$-$4$ & $1.8e$-$4$ \\ 
			& AISCV2 & $5.1e$-$9$ & $6.1e$-$9$ & $6.1e$-$9$ & $6.1e$-$9$ & $6.1e$-$9$ \\
			\hline
			\multirow{3}{4.1em}{\centering Red  Wine} &  AIS & $3.0e$-$2$ & $1.3e$-$2$ & $7.0e$-$3$ & $4.7e$-$3$ & $2.8e$-$3$\\
			& AISCV1 & $3.7e$-$3$ & $1.5e$-$3$ & $8.7e$-$4$ & $6.4e$-$4$ & $4.2e$-$4$  \\ 
			& AISCV2 & $5.1e$-$10$ & $5.1e$-$10$ & $5.1e$-$10$ & $5.1e$-$10$ & $5.1e$-$10$ \\
			\hline
			\multirow{3}{4.8em}{\centering White Wine} &  AIS & $1.1e$-$2$ & $2.6e$-$3$ & $8.1e$-$4$ & $4.2e$-$4$ & $1.8e$-$4$ \\
			& AISCV1 & $7.1e$-$3$ & $1.5e$-$3$ & $4.0e$-$4$ & $2.1e$-$4$ & $9.2e$-$5$\\ 
			& AISCV2 & $2.4e$-$9$ & $2.4e$-$9$ & $2.4e$-$9$ & $2.4e$-$9$ & $2.4e$-$9$ \\
			\hline
		\end{tabular}
		\caption{Mean Square Errors for different datasets with $g(\theta)=\|\theta\|_2^2$ obtained over $100$ replications.}
		\label{tab:BLR_real}
	\end{table}
	
		\begin{table}[h!]
		\centering
		\begin{tabular}{ccccc} 
			\hline
			Sample Size $n$ & $5,000$ & $10,000$ & $20,000$ & $30,000$ \\
			\hline
			Housing &  $5.5e$-$5$ & $2.7e$-$5$ & $1.9e$-$5$ & $1.2e$-$5$  \\
			\hline
			Abalone &  $1.8e$-$4$ & $8.6e$-$5$ & $6.7e$-$5$ & $5.6e$-$5$  \\
			\hline
			Red Wine &  $2.7e$-$4$ & $1.8e$-$4$ & $9.5e$-$5$ & $5.2e$-$5$  \\
			\hline
			White Wine &  $3.8e$-$4$ & $1.6e$-$4$ & $8.5e$-$5$ & $7.3e$-$5$  \\
			\hline
		\end{tabular}
		\caption{MSE with $g(\theta)=\|\theta\|_2^2$ obtained over $30$ chains of NUTS sampler.}
		\label{tab:results_HMC}
	\end{table}
	
\textbf{Monte Carlo Markov Chain}. We run a state-of-the-art MCMC method called NUTS \cite{hoffman2014no}, which is a self-tuning variant of Hamiltonian Monte Carlo. It may be hard to compare precisely this method against the AIS based methods since they are different in nature. Indeed, the goal of MCMC methods is to sample from a target distribution whereas AISCV methods are meant for variance reduction. In both cases there are hyperparameters to tune. For AIS-based methods, there is the choice of the policy $(q_i)_{i \geq 0}$, the choice of the control variates and the number of particles $n_t$ to draw at each step. For the NUTS sampler, there is among others, the number of samples used for the tuning phase and the initialization of the Markov kernel. A reasonable comparison is obtained based on the overall number of sampled particles. Table \ref{tab:results_HMC} above presents the mean squared errors obtained over $30$ chains of NUTS sampler with default configuration of the parameters from the Python library \textit{pymc3} \cite{patil2010pymc}.

\newpage




\end{document}